\newtheorem{theorem}{Theorem}[section]
\newtheorem{lemma}[theorem]{Lemma}
\newtheorem{proposition}[theorem]{Proposition}
\newtheorem{remark}[theorem]{Remark}
\newtheorem{assumption}[theorem]{Assumption}
\newcommand{\R}{\mathbb{R}}
\newcommand{\di}{\,\text{d}}
\newcommand{\Ex}{\mathsf{E}}
\newcommand{\Norm}{\mathcal{N}}
\DeclareMathOperator*{\argmax}{argmax}
\DeclareMathOperator*{\argmin}{argmin}
\newcommand{\N}{\mathbb{N}}
\newcommand{\Var}{\mathrm{Var}}
\DeclarePairedDelimiter\abs{\lvert}{\rvert}%
\DeclarePairedDelimiter\inner{\langle}{\rangle}%
\DeclarePairedDelimiter\norm{\lVert}{\rVert}%
\begin{document}

\title{On the Estimation of Entropy in the FastICA Algorithm}
\author{Elena Issoglio, Paul Smith\footnote{Corresponding author: mmpws@leeds.ac.uk}, and Jochen Voss}
\maketitle

\begin{abstract}
  The fastICA method is a popular dimension reduction technique used
  to reveal patterns in data. Here we show both theoretically and in
  practice that the approximations used in fastICA can result in
  patterns not being successfully recognised. We demonstrate this
  problem using a two-dimensional example where a clear structure is
  immediately visible to the naked eye, but where the projection
  chosen by fastICA fails to reveal this structure. This implies that
  care is needed when applying fastICA. We discuss how the problem
  arises and how it is intrinsically connected to the approximations
  that form the basis of the computational efficiency of
  fastICA.
\end{abstract}

{\bf Keywords} -- {\em Independent component analysis, fastICA, projections,
projection pursuit, blind source separation, counterexample, convergence,
approximation}

{\bf 2010 AMS subject classification:} {62-04, 65C60}

\section{Introduction}\label{secIntroduction}

Independent Component Analysis (ICA) is a well-established and popular
dimension reduction technique that finds an orthogonal projection of data
onto a lower-dimensional space, while preserving some of the original
structure. ICA is also used as a method for blind source separation and is
closely connected to projection pursuit.
We refer the reader to
\citet{hyvarinen2004independent}, \citet{hyvarinen1999fast} and
\citet{stone2004independent}
for a comprehensive overview of the mathematical
principles underlying ICA and its applications in a wide variety of practical
examples.

In ICA, the projections that are determined to be ``interesting'' are
those that maximise the non-Gaussianity of the data, which can be
measured in several ways. One quantity for this measurement that is
used frequently in the ICA literature is entropy. For distributions
with a given variance, the Gaussian distribution is the one which
maximises entropy, and all other distributions have strictly smaller
entropy. Therefore, our aim is to find projections which minimise the
entropy of the projected data. Different methods are available for
both the estimation of entropy and the optimisation procedure, and
have different speed-accuracy trade-offs.

A widely used method to perform ICA in higher dimensions is fastICA
\citep{hyvarinen2000independent}. This method has found applications
in areas as wide ranging as facial recognition
\citep{draper2003recognizing}, epileptic seizure detection
\citep{yang201581} and fault detection in wind turbines
\citep{farhat2017fast}. Recent works on extensions of the algorithm
can be seen in \citet{miettinen2014deflation},
\citet{ghaffarian2014automatic} and \citet{he2017large}. The fastICA
method uses a series of substitutions and approximations of the
projected density and its entropy. It then applies an iterative
scheme for optimising the resulting contrast function (which is an
approximation to negentropy). Because of its popularity in many
areas, analysis and evaluation of the strengths and weaknesses of the
fastICA algorithm is crucially important. In particular, we need to
understand both how well the contrast function estimates entropy and
the performance of the optimisation procedure.

The main strength of the fastICA method is its speed, which is
considerably higher than many other methods. Furthermore, if the data
is a mixture of a small number of underlying factors, fastICA is often
able to correctly identify these factors. However, fastICA also has
some drawbacks, which have been pointed out in the
literature. \citet{learned2003ica} use test problems from
\citet{bach2002kernel} with performance measured by the Amari error
\citep{amari1996new} to compare fastICA to other ICA methods. They
find that these perform better than fastICA on many examples.
Focussing on a different aspect, \citet{wei2014spurious} investigates
issues with the convergence of the iterative scheme employed by
fastICA to optimise the contrast function. In \citet{wei2017study} it
is shown that the two most common fastICA contrast functions fail to
de-mix certain bimodal distributions with Gaussian mixtures, although
some other contrast function choices (related to classical kurtosis
estimation) may give reliable results within the fastICA framework.

In this article we identify and discuss a more fundamental problem
with fastICA. We demonstrate that the approximations used in fastICA
can lead to a contrast function where the optimal points no longer
correspond to directions of low entropy.

\begin{figure}
  \centering
  \includegraphics{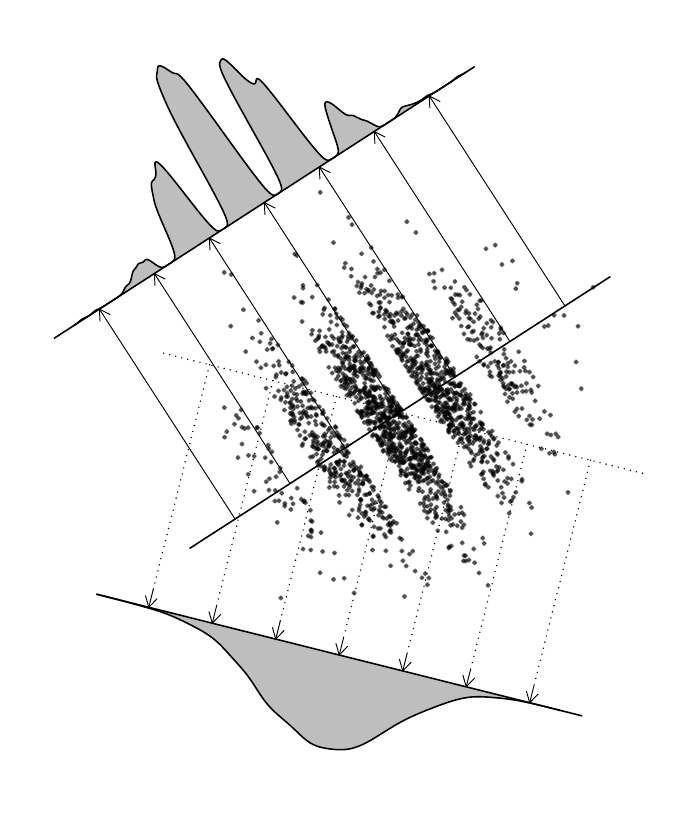}
  \caption{Scatter plot of original data with densities of the
    projected data in the direction obtained by $m$-spacing ICA (solid
    line) and fastICA (dotted line). Kernel density estimation was
    used to obtain the marginal densities shown.\label{figPoints}}
\end{figure}

To illustrate the effect discussed in this paper, we consider the
example shown in Figure~\ref{figPoints}. In this example,
two-dimensional samples are generated from a two-dimensional normal
distribution, conditioned on avoiding a collection of parallel bands
(see Section~\ref{secExample} for details). This procedure produces a
pattern which is obvious to the bare eye, and indeed the projection
which minimises entropy (solid lines) exposes this pattern. In
contrast, the fastICA contrast function seems unable to resolve the
pattern of bands and prefers a direction with higher entropy which
does not expose the obvious pattern (dotted lines).

We remark that
this failure by fastICA to recover the obvious structure is relatively
robust in this example. Changing the parameters used in the fastICA
method does not significantly change the outcome, and the underlying
structure is still lost. It is also worth mentioning here that the
example dataset was very simple to obtain and no optimisation was
performed to make the fastICA method perform poorly.

To obtain the projection indicated by the solid lines in Figure~\ref{figPoints} we
used the $m$-spacing method \citep{beirlant1997nonparametric} for
entropy estimation, combined with a standard optimisation technique.
The $m$-spacing entropy approximation is shown to be consistent in
\citet{hall1984limit} and converges to true entropy for large sample
size. While the $m$-spacing entropy approximation theoretically makes
for an excellent contrast function for use in ICA, it is relatively
slow to evaluate.

Our main theoretical contribution helps explain why fastICA in the
example in Figure~\ref{figPoints} performs poorly. To obtain the
contrast function in the fastICA method a surrogate to the true
density is first obtained, and then it is approximated through several
steps to increase computational speed. In Section~\ref{secProofs} we
show convergence results for the approximation steps, and conclude
that the accuracy loss occurs at the initial stage where the real
density is replaced by the surrogate one.  This is highlighted in
Figure~\ref{figOptimalMSpacingDenn}, which shows the estimated density
(solid line) along the direction that exposes the pattern
(\textit{i.e.}\ the solid line in Figure~\ref{figPoints}). The dotted
line shows the surrogate density in this same direction that the
fastICA method uses in its approximation to entropy.
Figure~\ref{figOptimalFastIcaDenn} shows the analogue for the
direction found by fastICA (\textit{i.e.}\ the dotted line in
Figure~\ref{figPoints}).  This figure motivates this paper by
highlighting the area where the approximations used in fastICA diverge
from the true values. The two densities in Figure~\ref{figOptimalMSpacingDenn}
are very different to one-another, and this error
propagates through the fastICA method to the approximation used for
entropy.  Note that the solid lines in Figure~\ref{figOptimalDenn}
show the same estimated densities as given in Figure~\ref{figPoints}.

\begin{figure}
  \centering
  \begin{subfigure}[b]{0.45\textwidth}
    \includegraphics{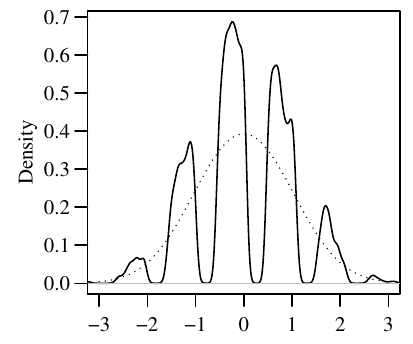}
    \caption{Densities along the optimal direction found using the $m$-spacing method}
    \label{figOptimalMSpacingDenn}
  \end{subfigure}
  \hfill
  \begin{subfigure}[b]{0.45\textwidth}
    \includegraphics{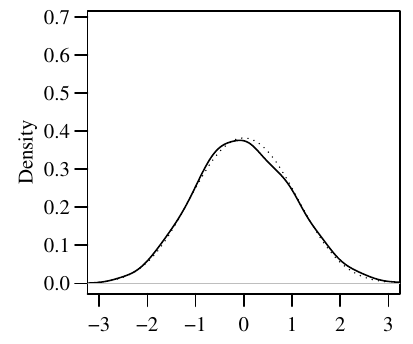}
    \caption{Densities along the optimal direction found using the \texttt{fastICA} method}
    \label{figOptimalFastIcaDenn}
  \end{subfigure}
  \caption{Plots showing an estimate of the density of the projected data (solid
    line), and the surrogate density $f_0$ used in the fastICA method
    (dotted line), for two different projections of the
    data. Panel~\ref{figOptimalMSpacingDenn} corresponds to the
    direction of highest entropy, found using $m$-spacing, and
    Panel~\ref{figOptimalFastIcaDenn} corresponds to the direction
    found by fastICA. These two directions are shown by the solid line
    and dotted line respectively in Figure~\ref{figPoints}.\label{figOptimalDenn}}
\end{figure}

This paper is structured as follows. In Section~\ref{secEntropy}, entropy and
negentropy are introduced alongside associated estimates.
In Section~\ref{secFastIca} we describe the
fastICA method. Section~\ref{secProofs} contains some
proofs which help to understand where errors are introduced in
the fastICA method. Section~\ref{secExample} contains details
on the example given in Figure~\ref{figPoints}. Some concluding
remarks can be found in Section~\ref{secConclusion}.
The code to produce the figures in this paper can be found at
\url{https://github.com/pws3141/fastICA_code}.

\section{Entropy and Negentropy}
\label{secEntropy}

The aim of the fastICA method is to efficiently find a
projection of given data which minimises entropy.
Suppose we have a one-dimensional random variable $X$ with density
$f\colon \R \rightarrow [0, \infty)$. Then the entropy $H$ of the
distribution of $X$ is defined to be
\begin{equation}\label{eqDifferentialEntropy}
  H[f] := - \int_{\R} f(x) \log f(x) \di x,
\end{equation}
whenever this integral exists. We use square brackets to indicate that
$H$ is a functional, taking the function $f$ as its argument. In the
special case of a Gaussian random variable with variance~$\sigma^2$,
the entropy can be calculated explicitly and it takes the value
$\eta(\sigma^2)$ given by
\begin{equation}\label{eqGaussEntropy}
  \eta(\sigma^2)
  := \frac{1}{2} \bigl(1 + \log(2\pi \sigma^2)\bigr).
\end{equation}
It is known that this is an upper bound for entropy, namely the
entropy of any random variable with variance $\sigma^2$ will belong to
the interval $(-\infty, \eta(\sigma^2)]$ \citep[see, for
example,][]{cover2012elements}. The negentropy $J$ is defined as
\begin{align*}
        J[f] &:= \eta(\sigma^2) - H[f],
\end{align*}
where $\eta( \sigma^2)$ is given by~\eqref{eqGaussEntropy}. This
implies that $J[f]\in[0,\infty)$. Negentropy is zero when the density
is Gaussian, and strictly greater than zero otherwise.

As the definition of entropy involves the integral of the density, the
estimation of entropy or negentropy from data is non-trivial.
For a survey of different methods to estimate entropy from data,
see~\citet{beirlant1997nonparametric}.   As an example,
we consider here the $m$-spacing estimator, originally given
in~\citet{vasicek1976test}.  Suppose we have a sample of
one-dimensional points, $y_1, y_2, \ldots, y_n \in \R$, from a
distribution with density $f$, and $y_{(1)}, y_{(2)}, \ldots, y_{(n)}$
is the ordering such that
$y_{(1)} \leq y_{(2)} \leq \cdots \leq y_{(n)}$. Define the
$m$-spacing difference to be $\Delta_m y_i = y_{(i+m)} - y_{(i)}$ for
$m \in \{3, \ldots, n - 1\}$ and $i \in \{1, 2, \ldots, n - m\}$. The
$m$-spacing approximation for entropy $H[f]$ for the sample
$y = (y_1, y_2, \ldots, y_n)$ is given by
\begin{equation}\label{m-spacing}
  H_{m, n}(y)
  = \frac{1}{n} \sum_{i=1}^{n-m} \log\Bigl(\frac{n}{m}\, \Delta_m y_i \Bigr)
    - \digamma(m) + \log(m),
\end{equation}
where $\digamma(x) = -\frac{\di}{\di x} \Gamma(x)$ is the digamma
function. This is a realisation of the general $m$-spacing formula
given in \citet{hall1984limit}.
This approximation tends to the
true value of entropy under certain conditions and so for a ``large enough'' number
of points should be comparable to the true value.
This method has been used
previously within an ICA method by
\citet{learned2003ica}.  While the methods provides consistent estimates for the
entropy, it is computationally expensive.  The main contribution to computational
cost comes from the need to sort the sample $y$ in increasing order.

The fastICA method provides a more efficient way to estimate
negentropy $J[f]$ by using a series of approximations and substitutions both for
$f$ and for $J[\cdot]$ to obtain a surrogate for negentropy $J[f]$ which is then
subsequently maximised. The reason behind these substitutions is to reduce
computational cost, but the drawback is that the resulting approximation may
be very different from the true contrast function.

\section{The fastICA Algorithm}
\label{secFastIca}

In this section we describe the fastICA method of
\citet{hyvarinen2000independent}.  The theory behind this method was
originally introduced in \citet{hyvarinen1998new}, although here we
adjust the notation to match the rest of this paper. We will mention
explicitly where our notation differs from \citet{hyvarinen1998new}
and \citet{hyvarinen2000independent}.  We will write `fastICA' when we
are discussing the theoretical method, and `\texttt{fastICA}' when we
are discussing the \texttt{R} implementation from the \texttt{fastICA}
CRAN package \citep{RfastICA}.

The fastICA method to obtain the first loading from data
$\tilde D \in \R^{n \times \tilde p}$ follows the steps given below.
Following the usual convention, the rows of $\tilde D$ denote
observations, the columns denote variables.
\begin{enumerate}
        \item[i.] Whiten the data to obtain $D \in \R^{n \times p}$ with
        $p = \min(\tilde p, n - 1)$, such that $\frac{1}{n - 1} D^\top D = I_p$
        \citep[Section 5.2]{hyvarinen2000independent};
        \item[ii.] Iteratively find the optimal projection $w^\ast$, given by
                \begin{equation}\label{eqIterative}
                    w^\ast = \argmax_{w \in \R^p,\, w^\top w = 1}
                        \hat J^\ast(Dw),
                \end{equation}
                where $\hat J^\ast$ is an approximation to negentropy,
                given in equation~\eqref{eqHatJStarY} below.
\end{enumerate}
If more than one loading is required, Step~ii.\ is repeated for each
subsequent new direction, with the added constraint that $w$ must be
orthogonal to the previously found directions.  This can be
implemented within the fastICA framework using Gram-Schmidt
orthogonalisation \citep[Section 6.2]{hyvarinen2000independent}.  This
is known as the \textit{deflation} fastICA method. There is also a
\textit{parallel} fastICA method that finds all loadings concurrently,
although in this paper we only consider the deflation approach.

In the literature regarding fastICA it is often the convergence of the
iterative method to solve \eqref{eqIterative} that is examined. It can
be shown, for example in \citet{wei2014spurious}, that in certain
situations this iterative step fails to find a good approximation for
$w^\ast$. In contrast, here we consider the mathematical substitutions
and approximations used in the derivation of $\hat J^\ast(Dw)$.
Assumption~\ref{assumptionGK} introduces the technical assumptions
given in \citet[Sections 4 and 6]{hyvarinen1998new}, using slightly
adjusted notation.

\begin{assumption}\label{assumptionGK}
        Let $G_i$, $i = 1, 2, \ldots, I$ be functions that do not grow faster than
        quadratically.
        Let $\varphi(\cdot)$ denote the density of a standard Gaussian
        random variable and assume that there are
        $\alpha_i, \beta_i, \gamma_i, \delta_i$, $i = 1, 2, \ldots, I$, such that
        the functions
        \begin{align}\label{eqKx}
                K_i(x) := \frac{G_i(x) + \alpha_i x^2 + \beta_i x + \gamma_i}{\delta_i}
        \end{align}
        satisfy
        \begin{subequations}\label{eqKxConditions}
        \begin{align}
                \int_{\R} K_i(x) K_j(x) \varphi(x) \di x
                        &= \mathbbm{1}_{\{i = j\}};\text{ and, }\label{eqKxConditions-1}\\
                \int_{\R} K_i(x) x^k \varphi(x) \di x &= 0,
                        \text{ for } k = 0, 1, 2,\label{eqKxConditions-2}
        \end{align}
        \end{subequations}
        for $i,\,j = 1, 2, \ldots, I$, where $\mathbbm{1}_{\{i = j\}} = 1$
        if $i = j$ and zero otherwise.
\end{assumption}

The functions $G_i$ are given as $\bar G_i$ in \citet{hyvarinen1998new} and
as $G_i$ in \citet{hyvarinen2000independent}.  The functions $K_i$ are
described in \citet[Section 6]{hyvarinen1998new} and are called $G_i$
there.

The \texttt{fastICA} algorithm only implements the case $I = 1$.  In
this case, the function $G_1$ can be chosen nearly arbitrarily so long as it
does not grow faster than quadratically: It is easy to show that for every $G$
which is not exactly equal to a second order polynomial, a function $K_1$
can be found that satisfies the conditions given
in~\eqref{eqKxConditions} by choosing suitable $\alpha_1$, $\beta_1$,
$\gamma_1$ and $\delta_1$.  For general $I \in \mathbb{N}$, specific $G_i$,
$i = 1, 2, \ldots, I$ must be chosen for the
conditions~\eqref{eqKxConditions} to hold.  With $I = 2$, the
functions $G_1(x) = x^3$ and $G_2(x) = x^4$ are proposed in the
literature \citep[Section $7$]{hyvarinen1998new} and seem to be useful
in practice, even though these functions violate the growth condition
from Assumption~\ref{assumptionGK}.  We have not found any examples of
specific functions $G_i$ that satisfy~\eqref{eqKxConditions} for
$I > 2$ in the fastICA literature.%

Let $w\in\R^p$ with $\|w\|=1$ and let
$y = (y_1, y_2, \ldots, y_n) = Dw \in \R^n$ be the data projected
onto~$w$.  Since the data has been whitened, $y$ has sample mean~$0$
and sample variance~$1$.  Further, let $f\colon \R\to \R$ be the
unknown density of the population-level-whitened and projected data.  Then $f$
satisfies $\int f(x) \,dx = 1$, $\int x \, f(x) \, dx = 0$ and
$\int x^2 \, f(x) \, dx = 1$.  We need to estimate the negentropy
$J[f]$ using the data $y_1, \ldots, y_n$.  Define
\begin{equation}\label{eqConstraints}
  c_i
  := \Ex_f K_i(X)
  = \int f(x) K_i(x) \di x
\end{equation}
for all $i\in\{1, \ldots I\}$.
For $I = 1$, setting $K(x) := K_1(x)$, $G(x) := G_1(x)$ and $c := c_1$,
the derivation of the contrast function used in the
fastICA method then consists of the following steps:
\begin{enumerate}
        \item Replace $f$ by a density $f_0$ given by
\begin{equation}\label{eqf0}
        f_0(x) = A \exp\Bigl(\kappa x + \zeta x^2 + a K(x) \Bigr),
\end{equation}
for all $x \in \R$.  The constants $A$, $\kappa$, $\zeta$ and
$a$ are  chosen to minimise negentropy (and hence maximise entropy)
under the constraints $\int f_0(x) K(x) \di x = c$.
In Proposition~\ref{propf0MaxEntropy} we will show that $J[f_0] \leq J[f]$.
        \item Approximate $f_0$ by $\hat f_0$ defined as
                \begin{equation}\label{eqhatf0}
                        \hat f_0(x) = \varphi(x) \bigl(1 + c K(x)\bigr)
                \end{equation}
                for all $x \in \R$.
                In Theorem~\ref{theoremSupremumf0Hatf0} we will show $J[\hat f_0] \approx J[f_0]$.
        \item Approximate $J[\hat f_0]$ by second order Taylor expansion,
                \begin{equation}\label{eqHatJHatf0}
                        \hat J[\hat f_0] = \frac{1}{C} \bigl(
                                        \Ex_f G(Y) - \Ex_\varphi G(Z)
                                        \bigr)^2,
                \end{equation}%
                where $Y$ is a random variable with density $f$, $Z \sim \Norm(0,1)$,
                and $C$ some constant.
                Note that, maybe surprisingly, $Y$ has density $f$, not $f_0$.
                In Proposition~\ref{propNegentropyProportional} we will show that
                $\hat J[\hat f_0] \approx J[\hat f_0]$.
        \item  Use Monte-Carlo approximation for the expectations in~\eqref{eqHatJHatf0},
                \textit{i.e.}\ use
                \begin{equation}\label{eqHatJStarY}
                        \hat J^\ast(y) = \Bigl(\frac{1}{n} \sum_{j = 1}^n G(y_j)
                        - \frac{1}{L} \sum_{j = 1}^L G(z_j)\Bigr)^2,
                \end{equation}
                where $z_1, \ldots, z_L$ are samples
                from a standard Gaussian and $L$ is large.
                Here $\hat J^\ast(y) \approx C \hat J[\hat f_0] $.
\end{enumerate}
The restriction to $I = 1$ here removes a summation from Step~1.\ and
Step~2.\, therefore simplifying Step~3.\ and the associated
estimations in Step~4.  Theoretically these steps can be completed for
arbitrary $I \in \mathbb{N}$, although in this case a closed-form
version equivalent to Step~3.\ is much more complicated.

The approximation~\eqref{eqHatJStarY} to the negentropy used in
fastICA dramatically decreases the computational time needed to find
ICA projections.  Unlike the $m$-spacing estimator introduced in
Section~\ref{secEntropy}, the approximation $\hat J^\ast(Dw)$ is a
simple Monte-Carlo estimator and does not require sorting of the data.
The algorithm to solve~\eqref{eqIterative} also benefits from the fact
that an approximate derivative of $w \mapsto \hat J^\ast(Dw)$ can be
derived analytically.

The steps in this chain of approximations are illustrated in
Figure~\ref{figFastIcaApproximations} and we will investigate the
approximation more formally in Section~\ref{secProofs}.  In Step~1.\
of the procedure, we do not obtain a proper approximation, but have an
inequality instead: $f$ is replaced with a density $f_0$ such that
$J[f_0] \leq J[f]$.  As a result, the $w$ which maximises $J[f_0]$ can
be very different from the one which maximises~$J[f]$.  In contrast,
Steps~2.\ and~3.\ are proper approximations and in
Section~\ref{secProofs} we prove convergence of $\hat f_0$ to $f_0$
for Step~2.\ and of $\hat J[\hat f_0]$ to $J[f_0]$ for Step~3.\ in the
limit $\|c\| \to 0$, where $c = (c_1, \ldots, c_I)$.  Step~4.\ is a
simple Monte-Carlo approximation exhibiting well-understood behaviour.
From the above discussion, it seems sensible to surmise that the loss
of accuracy in fastICA is due to the surrogate used in Step~1.\ above.

\begin{figure}
  \centering
  \begin{tikzpicture}
  \node [left] at (0,-1) {$f$};
  \draw [->] (0,-1) --(1.5,-1);
  \node [right] at (1.5,-1) {$J[f]$};
  \node [rotate=-90] at (2.05,-1.75) {$\geq$};

  \node [left] at (0,-2.5) {$f_0$};
  \draw [->] (0,-2.5) --(1.5,-2.5);
  \node [right] at (1.5,-2.5) {$J[f_0]$};
  \node [rotate=-90] at (2.05,-3.25) {$\approx$};
  \node [rotate=-90] at (-0.25,-3.25) {$\approx$};

  \node [left] at (0,-4) {$\hat f_0$};
  \draw [->] (0,-4) --(1.5,-4);
  \node [right] at (1.5,-4) {$J[\hat f_0]$};
  \node [rotate=-90] at (2.05,-4.75) {$\approx$};

  \node [right] at (1.5,-5.5) {$\hat J[\hat f_0] = \frac{1}{C} \bigl(\Ex_f G(Y) -
        \Ex_\varphi G(Z)\bigr)^2$};
  \node [rotate=-90] at (2.05,-6.25) {$\approx$};

  \node [right] at (1.5,-7) {$\frac{1}{C} \, \hat J^*(y)$, where 
                $\hat J^*(y)  = 
                    \bigl(\frac{1}{n} \sum_{j = 1}^n G(y_j) 
                        - \frac{1}{L} \sum_{j = 1}^L G(z_j)\bigr)^2$};






\end{tikzpicture}
  \caption{Approximations used in fastICA: The fastICA contrast
    function $\hat J^\ast(y)$ is used in place of negentropy $J[f]$.
    Note that the first step involves an inequality rather than
    an approximation.\label{figFastIcaApproximations}}
\end{figure}

We conclude this section with a few simple observations: Using
\eqref{eqKxConditions}, \eqref{eqKx} and the fact that $X$ and $Z$ are
standardized we find
\begin{align*}
  c_i
  &= \Ex_f K_i(X) \\
  &= \Ex_f K_i(X) - \Ex_\varphi K_i(Z) \\
  &= \Ex_f \left( \frac{G_i(X) + \alpha_i X^2 + \beta_i X + \gamma_i}{\delta_i} \right)
    - \Ex_\varphi \left( \frac{G_i(Z) + \alpha_i Z^2 + \beta_i Z + \gamma_i}{\delta_i} \right) \\
  &= \frac{\Ex_f G_i(X) + \alpha_i 1 + \beta_i 0 + \gamma_i}{\delta_i}
    - \frac{\Ex_\varphi G_i(Z) + \alpha_i 1 + \beta_i 0 + \gamma_i}{\delta_i} \\
  &= \frac{\Ex_f G_i(X) - \Ex_\varphi G_i(Z)}{\delta_i}.
\end{align*}
Thus, the fastICA objective function (ignoring the final Monte Carlo
approximation) satisfies $\hat J[\hat f_0] \propto c^2$ for the case
$I=1$, considered above, and
$\hat J[\hat f_0] \propto \sum_{i=1}^I c_i^2$ in the general case.
Thus, fastICA can only see the data through the~$c_i$.  If the data
are approximately Gaussian, we have
$\Ex_f G_i(X) \approx \Ex_\varphi G_i(Z)$ and $c_i \approx 0$ for
all~$i$ and thus $\hat J[\hat f_0] \approx 0$, but the opposite
implication does not hold.  This is in contrast to the true
negentropy, which satisfies $J[f] = 0$ if and only if $f$ is Gaussian.

A first consequence of this argument is that projections where the
true distribution is Gaussian will look `uninteresting' to fastICA:
for these directions $w$ the objective function $\hat J^\ast(Dw)$ will
be small and the search for the maximum in~\eqref{eqIterative} will be
driven away from these directions.  This is particularly relevant
since for high dimensional data, where the search volume is vast,
projections along most directions are close to Gaussian
\citep{diaconis1984asymptotics, von1997sudakov}, so fastICA will be
able to exclude much of the search volume.  Conversely, if
$\hat J[\hat f_0]$ and thus $\|c\|$ is large, the projected density
$f$ is not Gaussian and by maximising (an approximation to)
$\hat J[\hat f_0]$, the fastICA method will find directions which are
`interesting'.  But the above discussion also shows that optima can be
missed when $\hat J[\hat f_0]$ is small, but the projected density~$f$
is still far from Gaussian.  This is the case we are concerned with in
this paper and thus we assume $\|c\| \approx 0$ when we consider the
fastICA approximations in detail in the next section.

\section{Approximations used in the fastICA Method}
\label{secProofs}

In this section, we investigate the validity of the approximation
given in Section~\ref{secFastIca}. We consider Step~1.\ in
Proposition~\ref{propf0MaxEntropy}, Step~2.\ in
Theorem~\ref{theoremSupremumf0Hatf0}, and Step~3.\ in
Proposition~\ref{propNegentropyProportional}.  Throughout this
section, we consider arbitrary $I \in \mathbb{N}$ for completeness.

We first introduce some assumptions, in addition to
Assumption~\ref{assumptionGK}, that are required for the mathematics
in this section to hold.

\begin{assumption}\label{as1K}
        There exists
        $\varepsilon > 0$ such that for all $h \in \R^I$ with $h^\top h <
        \varepsilon$, we have
        \begin{equation}\label{eqAsmpKBoundedBelow}
                h^\top K(x) \geq -\frac{1}{2}
        \end{equation}
        for all $x \in \R$, where
        $K(x) = \bigl( K_1(x), K_2(x), \ldots, K_I(x) \bigr)$.
        In addition, there exists a function $M\colon \R \rightarrow \R$ such that
        \begin{subequations}\label{eqKxConditionM}
        \begin{align}
            \sum_{i=1}^I \sum_{j=1}^I \sum_{k=1}^I
                    \lvert K_i(x) K_j(x) K_k(x) \rvert &\leq M(x)
                    \text{ for all } x \in \R,
                    \text{ and}\label{eqKxConditionM-1}\\
                \int_{\R} \varphi(x) M(x) \di x &=: \tilde M < \infty.
                    \label{eqKxConditionM-2}
        \end{align}
        \end{subequations}
\end{assumption}

Note that under the condition that each $G_i$ does not grow faster than
quadratically (given in Assumption~\ref{assumptionGK}), we can always find some
positive constants $B_i$, $i = 1, 2, \ldots, I$ such that
\begin{equation}\label{eqKiBoundingConstraint}
        \lvert K_i(x) \rvert \leq B_i (1 + x^2),
\end{equation}
for all $x \in \R$.
Note also that for $I = 1$ the condition
given by~\eqref{eqAsmpKBoundedBelow} that
there exists an $\varepsilon > 0$ such that for all $h \in [0,\varepsilon)$,
we have $h K(x) \geq -1 / 2$ is satisfied as follows.
Let $\alpha, \beta,
\gamma, \delta$ be parameters for which~\eqref{eqKxConditions} holds.
Then,~\eqref{eqKxConditions} holds also for $\alpha, \beta, \gamma, -\delta$.
Moreover, since $G$ does not grow faster than quadratically, $\alpha x^2$ is the
dominant term in $K(x)$ as $x \rightarrow \pm \infty$. Therefore, to ensure
that~\eqref{eqAsmpKBoundedBelow} holds it is enough to choose $\delta$ or $- \delta$
such that the sign is the same as that of $\alpha$.

\subsection{Step 1.}

  We start our discussion by considering Step~1.\ of the approximations
  described in Section~\ref{secFastIca}.  We prove that the
  distribution which maximises entropy for given values of
  $c_1, \ldots, c_I$ is indeed of the form~\eqref{eqf0} and thus
  that we indeed have $J[f_0] \leq J[f]$.%

\begin{proposition}\label{propf0MaxEntropy}
Let $f$ be the density of the population-level-whitened data projected in some direction
(thus with zero mean and unit variance). Recall $c_i$ is defined by~\eqref{eqConstraints}.
The density $f_0$ that maximises entropy in the set
\[
        \Bigl\{g\colon \R \rightarrow \R \,;\, g \text{ is a density function, and }
                    \int_\R g(x) K_i(x) \di x = c_i,\, i = 1, 2, \ldots, I \Bigr\},
\]
is given by,
\begin{equation}\label{eqMaxEntropyDensity}
        f_0(x) = A \exp\left(\kappa x + \zeta x^2 + \sum_{i = 1}^I a_i K_i(x)\right)
\end{equation}
for some constants $\kappa$, $\zeta$, $A$ and $a_i,\, i = 1, 2, \ldots, I$ that
depend on $c_i, \, i = 1, 2, \ldots, I$. It follows from this that
$J[f_0] \leq J[f]$.
\end{proposition}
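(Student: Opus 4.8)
The plan is to treat this as a classical maximum-entropy problem with affine moment constraints and to extract the inequality $J[f_0]\le J[f]$ directly from the non-negativity of the Kullback--Leibler divergence, rather than through a delicate second-variation analysis. The feasible set is understood to fix, in addition to the values $c_i$, the normalisation $\int g=1$ together with the mean and variance inherited from whitening, namely $\int g(x)\,x\di x=0$ and $\int g(x)\,x^2\di x=1$; these last two constraints are precisely what produce the $\kappa x$ and $\zeta x^2$ terms in the exponent of~\eqref{eqMaxEntropyDensity}.

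First I would derive the form~\eqref{eqMaxEntropyDensity} heuristically by writing the Lagrangian $H[g]+\lambda_0(\int g-1)+\lambda_1\int g\,x+\lambda_2(\int g\,x^2-1)+\sum_i a_i(\int g\,K_i-c_i)$ and setting its first variation with respect to $g$ to zero; stationarity gives $-1-\log g(x)+\lambda_0+\lambda_1 x+\lambda_2 x^2+\sum_i a_i K_i(x)=0$, which rearranges to the exponential form with $A=\exp(\lambda_0-1)$, $\kappa=\lambda_1$ and $\zeta=\lambda_2$. The substantive point is then to show that multipliers exist for which $f_0$ is a genuine, normalisable density attaining the prescribed constraint values $c_i$; here the quadratic-growth hypothesis on the $G_i$ guarantees that the $\zeta x^2$ term dominates as $x\to\pm\infty$ (so $f_0$ is integrable once $\zeta<0$), while Assumption~\ref{as1K} controls the $K_i$ contribution, so that the moment map $(\kappa,\zeta,a)\mapsto(\text{mean},\text{variance},c)$ is well defined and can be inverted.

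Granting that such an $f_0$ exists, the optimality argument is short and avoids any variational subtlety. I would observe that $\log f_0(x)=\log A+\kappa x+\zeta x^2+\sum_i a_i K_i(x)$ is an affine combination of the constraint functions $1,x,x^2,K_1,\dots,K_I$. Hence for \emph{any} admissible density $g$,
\[
  \int g(x)\log f_0(x)\di x
   = \log A+\kappa\cdot 0+\zeta\cdot 1+\sum_i a_i c_i,
\]
a quantity that does not depend on the choice of $g$; taking $g=f_0$ shows this common value equals $-H[f_0]$. Non-negativity of the relative entropy then gives
\[
  0\le \int g(x)\log\frac{g(x)}{f_0(x)}\di x
     =-H[g]-\int g(x)\log f_0(x)\di x
     =H[f_0]-H[g],
\]
so $H[f_0]\ge H[g]$ for every admissible $g$, which is exactly the maximum-entropy claim. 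Applying this with $g=f$---which is admissible, since $f$ has zero mean and unit variance and satisfies $\int f K_i=c_i$ by the definition~\eqref{eqConstraints} of $c_i$---yields $H[f_0]\ge H[f]$, and subtracting both sides from $\eta(1)$ gives $J[f_0]\le J[f]$.

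The main obstacle is the existence step in the second paragraph: guaranteeing that the parametric family~\eqref{eqMaxEntropyDensity} can be normalised and made to hit all target moments simultaneously. Everything after that is the standard Gibbs-inequality identity, which sidesteps having to verify that the critical point is a maximiser by using the convexity of $g\mapsto\int g\log g$ implicitly through $D_{\mathrm{KL}}\ge 0$.
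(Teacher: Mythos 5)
Your proposal is correct, and it goes beyond the paper's own argument in a useful way. The paper's proof consists only of the first part of your plan: it sets up the Lagrangian with functionals for normalisation, mean, variance and the $K_i$-moments, computes the G\^ateaux derivatives, and solves the stationarity condition $-1-\log g+\lambda_1+\lambda_2 x^2+\lambda_3 x+\sum_i\nu_i K_i=0$ to read off the exponential form~\eqref{eqMaxEntropyDensity}; it stops there, so it establishes only a \emph{necessary} condition for a maximiser and leaves the claim that the critical point is actually the entropy maximiser (and hence the inequality $J[f_0]\le J[f]$) implicit. Your second step --- observing that $\log f_0$ is an affine combination of the constraint functions, so that $\int g\log f_0$ is the same for every admissible $g$, and then invoking $D_{\mathrm{KL}}(g\,\|\,f_0)\ge 0$ to get $H[f_0]\ge H[g]$ --- is exactly the Gibbs-inequality argument that closes this gap, and it works because $f_0$ is strictly positive so absolute continuity is automatic. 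What the paper's route buys is a constructive derivation of the functional form from first principles; what your route buys is an actual proof of global optimality over the whole admissible set, without any second-variation analysis. Both treatments share the same unaddressed existence issue (that multipliers can be chosen so that $f_0$ is normalisable and hits the prescribed moments); you flag it explicitly, while the paper defers it, effectively resolving it only for small $\norm{c}$ via the Implicit Function Theorem in Proposition~\ref{propNormSquaredIFT}.
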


\begin{proof}
  We use the method of Lagrange multipliers in the calculus of
  variations \citep[see, for example,][]{lawrence1998partial}
  to find a necessary condition for the density that maximises entropy
  given the constraints on mean and variance, and
  in~\eqref{eqConstraints}. Let $F[\cdot]\colon C^2 \rightarrow \R$ be
  a functional of the function $g\colon \R \rightarrow \R$, with
  $g \in C^2$, where $C^2$ is the set of all twice continuously
  differentiable functions. Then, the functional derivative
  $\delta F/ \delta g\colon \R \rightarrow \R$ is explicitly defined by
\begin{equation}\label{eqDefnFunctionalDerivative}
                \int_{\R} \frac{\delta F}{\delta g}(x) \phi(x) \di x
                        := \frac{\di}{\di \varepsilon} F[g + \varepsilon \phi]
                    \Bigr\vert_{\varepsilon = 0}
                        = \lim_{\varepsilon \downarrow 0}
                    \Bigl(\frac{F[g + \varepsilon \phi] - F[g]}
                            {\varepsilon} \Bigr),
\end{equation}
for any function $\phi \in C^2$. The right-hand side
of~\eqref{eqDefnFunctionalDerivative} is known as the G\^{a}teaux
differential $\di F(g; \phi)$.  Define the inner product of two
functions by $\inner{g, h} := \int_{\R} g(x) h(x) \di x$, with norm
$\lVert g \rVert_{\text{L}^2} := \inner{g, g}^{\frac{1}{2}} =
\bigl(\int_{\R} g(x)^2 \di x\bigr)^{\frac{1}{2}}$.  We want to solve
the following system of equations

\begin{align*}
        \begin{cases}
                U[g](x) &:= \frac{\delta}{\delta g} H[g] +
                    \lambda_1 \frac{\delta}{\delta g} V[g] +
                    \lambda_2 \frac{\delta}{\delta g} P[g]
                    + \lambda_3 \frac{\delta}{\delta g} Q[g]
                    + \sum_{i=1}^I \nu_i \frac{\delta}{\delta g} R_i[g] = 0;\\
                V[g] &= 0;\\
                P[g] &= 0;\\
                Q[g] &= 0;\\
                R_i[g] &= 0,
        \end{cases}
\end{align*}
where $\lambda_1, \lambda_2, \lambda_3, \nu_i, i = 1,\ldots, I$ are some real
numbers, $H[g]$ is entropy as given in~\eqref{eqDifferentialEntropy}, and
\begin{align*}
        V[g] &:= \Var[g] - 1 = \int_{\R} g(x) x^2 \di x
            - \Bigl(\int_{\R} g(x) x \di x \Bigr)^2 - 1;\\
        P[g] &:= \int_{\R} g(x) \,\text{d}x - 1;\\
        Q[g] &:= \int_{\R} g(x) x \,\text{d}x;\\
        R_i[g] &:= \int_{\R} g(x) K_i(x) \,\text{d}x - c_i.
\end{align*}
Using~\eqref{eqDifferentialEntropy} and~\eqref{eqDefnFunctionalDerivative}
the term with $H$ gives,
\begin{align*}
    \langle \frac{\delta H}{\delta g}, \phi \rangle
        &= -\frac{\di}{\di \varepsilon} \int
            \bigl(g(x) + \varepsilon \phi(x)\bigr)
            \log\bigl(g(x) + \varepsilon \phi(x)\bigr)
                \di x \Bigr\vert_{\varepsilon = 0}\\
        &= - \int \Bigl( g(x) \frac{\phi(x)}{g(x) + \varepsilon \phi(x)}
            + \phi(x) \log\bigl(g(x) + \varepsilon \phi(x)\bigr)
            + \varepsilon \phi(x) \frac{\phi(x)}{g(x) + \varepsilon \phi(x)}
                \Bigr)\di x \Bigr\vert_{\varepsilon = 0}\\
        &= - \int \bigl(1 + \log g(x)\bigr) \phi(x) \di x\\
        &= \inner{-1 -\log g(x), \phi}.
\end{align*}
Now, looking at $V[g]$ and using the constraint $Q[g] = 0$ we get,
\begin{align*}
\inner{\frac{\delta V}{\delta g}, \phi}
    &= \frac{\di}{\di \varepsilon} \biggl(\int \bigl(g(x) + \varepsilon \phi(x)\bigr)
        x^2 \di x - \Bigl(\int \bigl(g(x) + \varepsilon \phi(x)\bigr) x \di x\Bigr)^2
        - 1 \biggr) \biggr\vert_{\varepsilon = 0}\\
    &= \int \phi(x) x^2 \di x - 2\Bigl(\int \phi(x) x \di x
        \cdot \int g(x) x \di x\Bigr)\\
    &= \inner{x^2, \phi} - 2 \inner{x, \phi} \cdot Q[g]\\
    &= \inner{x^2, \phi}.
\end{align*}
Let $L[\cdot]\colon C^2 \rightarrow \R$ be of the form
$L[g] = \int g(x) l(x) \di x - k$
for some function $l\colon \R \rightarrow \R$, and some constant $k \in \R$. Then it
is easy to check that $\inner{\dfrac{\delta L}{\delta g}, \phi} = \inner{l, \phi}$
and therefore $\dfrac{\delta P}{\delta g} = 1$, $\dfrac{\delta Q}{\delta g} = x$ and
$\dfrac{\delta R_i}{\delta g} = K_i$.
Putting this into the equation for $U[g]$, we have
\begin{align*}
        U[g](x) = - 1 - \log g(x) + \lambda_1 + \lambda_2 x^2 + \lambda_3 x
            + \sum_{i = 1}^I \nu_i K_i(x).
\end{align*}
Setting $U[g] = 0$ and solving for $g$ gives,
$g(x) = f_0(x) = \exp[\lambda_1 - 1 + \lambda_2 x^2 + \lambda_3 x + \sum_{i=1}^I \nu_i K_i(x)]$
which is~\eqref{eqMaxEntropyDensity} with
$A = \exp(\lambda_1 - 1)$, $\kappa = \lambda_3$, $\zeta = \lambda_2$ and $a_i
= \nu_i$, $i = 1, \ldots, I$.
Note that the constants $A,\,\kappa,\,\zeta$, and $a_i$ depend on $c_i$ indirectly
through the constraints on the $K_i$ expressed as $R_i[g] = 0$.
\end{proof}

\begin{remark}\label{remarkJfJf0Bounds}
  It is possible to specify a density $f$ such that in some limit,
  $H[f] \rightarrow \infty$ whilst $H[f_0]$ remains bounded and thus
  $\bigl| J[f] - J[f_0] \bigr| \rightarrow \infty$, with $f_0$ the
  density given in~\eqref{eqf0}. That is, in Step~1.\ of the fastICA
  method given in Section~\ref{secFastIca}, the difference between the
  true negentropy and the surrogate negentropy can be arbitrarily
  large. For example, set the density $f$ to be a mixture of two
  independent uniform densities, i.e.
  \[
    f(x) = \frac{1}{2} \bigl( g(x;\, -1 - \varepsilon, -1) +
                                            g(x;\, 1, 1 + \varepsilon) \bigr)
  \]
  where $\varepsilon \in \R$ and $g(\cdot\,;\, a, b)$ is the density
  function of a Uniform distribution in the interval $[a, b]$. Then we
  have expectation and variance given by
  \[
          \Ex_f X = 0; \quad \Var_f X = 1 + \varepsilon +
                                          \frac{\varepsilon^2}{3}.
  \]
  As the support of $g(\cdot\,;\, -1 - \varepsilon, -1)$ is disjoint
  from that of $g(\cdot\,;\, 1, 1 + \varepsilon)$, the entropy is
  given by,
  \[
          H[f] = \frac{1}{2} \bigl( H[g(\cdot\,;\, -1 - \varepsilon, -1)] +
                          H[g(\cdot\,;\, 1, 1 + \varepsilon)] \bigr)- \log(2).
  \]
  We have $H[f] \rightarrow -\infty$ as $\varepsilon \rightarrow 0$, since $f$ tends
  to a pair of Dirac deltas. Also,
  \begin{equation}\label{eqLimitOfDiracDeltasExpectation}
          \Ex_f K_i(x) =: c_i \rightarrow
                  \frac{1}{2} \bigl(K_i(-1) + K_i(1) \bigr),
  \end{equation}
  as $\varepsilon \rightarrow 0$. With $f_0$ as in~\eqref{eqf0},
  \begin{equation}\label{eqExf0Ki}
          c_i = \int K_i(x) f_0(x) \di x,
  \end{equation}
  and,
  \begin{align*}
          H[f_0] &= \int f_0(x) \log(A) \di x
                      + \int f_0(x) \bigl( \eta x + \kappa x^2 +
                                  \sum_{i = 1}^I a_i K_i(x) \bigr) \di x\\
                  &= \log(A) + \eta \Ex_{f_0} X + \kappa \Ex_{f_0} X^2
                      + \sum_{i=1}^I a_i \Ex_{f_0} K_i(x)\\
                  &= \log(A) + \kappa + \sum_{i = 1}^I a_i c_i.
  \end{align*}
  Therefore, for $H[f_0]$ to be unbounded from below as $\varepsilon \rightarrow 0$
  we would require some $\kappa \rightarrow -\infty$,
  $a_i \rightarrow -\infty$ or $A \rightarrow 0$,
  as $c_i$ is bounded by~\eqref{eqLimitOfDiracDeltasExpectation} and
  Assumption~\ref{as1K}. However, this can not occur whilst $f_0$
  satisfies~\eqref{eqExf0Ki}.
\end{remark}

\subsection{Step 2.}

We now switch our attention to Step~2.\ of the approximations.  As
discussed in Section~\ref{secFastIca}, we consider the case where
$c \rightarrow 0$.  The first step of our analysis is to identify the
behaviour of the constants in the definition of $f_0$ as $c\to 0$.  We
then prove some auxiliary results before concluding our discussion of
Step~2.\ in Theorem~\ref{theoremSupremumf0Hatf0}.

\begin{proposition}\label{propNormSquaredIFT}
Suppose Assumption~\ref{assumptionGK} is satisfied, and let
$A, \kappa, \zeta, a_1, \dots, a_I$ be defined as in
Proposition~\ref{propf0MaxEntropy}, as functions of $c$. Then
\begin{align*}
        \begin{split}
        \gdef\bigOc{\mathcal{O}(\norm{c}^2)}
                A - \frac{1}{\sqrt{2 \pi}} &= \bigOc\\
                \kappa &= \bigOc\\
                \zeta + \frac{1}{2} &=  \bigOc\\
                a_i - c_i &= \bigOc, \quad i = 1, 2, \ldots, I,
        \end{split}
\end{align*}
as $\|c\| \rightarrow 0$.
\end{proposition}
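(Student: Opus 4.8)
The plan is to apply the implicit function theorem to the system of $I+3$ constraint equations that pin down the $I+3$ unknowns $(A,\kappa,\zeta,a_1,\dots,a_I)$. Writing $\theta := (A,\kappa,\zeta,a_1,\dots,a_I)$ and $f_0(\cdot\,;\theta)$ for the density in~\eqref{eqMaxEntropyDensity}, I would define a map $\Phi(\theta;c)\in\R^{I+3}$ collecting the normalisation, mean, variance and moment constraints, that is, the components $\int f_0\di x - 1$, $\int x f_0\di x$, $\int x^2 f_0\di x - 1$, and $\int K_i f_0\di x - c_i$ for $i=1,\dots,I$. By Proposition~\ref{propf0MaxEntropy} the constants in question are exactly the solution of $\Phi(\theta;c)=0$. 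At $c=0$ the constraints $\int K_i f_0\di x = 0$ together with the mean and variance constraints force $f_0=\varphi$, by the maximum-entropy characterisation of the Gaussian, so the base point is $\theta(0) = (1/\sqrt{2\pi},\,0,\,-\tfrac12,\,0,\dots,0)$, matching the target values in the statement.

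Next I would compute the Jacobian $D_\theta\Phi$ at this base point, where $f_0=\varphi$, and observe that it is block diagonal. The key point is that the orthogonality conditions~\eqref{eqKxConditions-2}, namely $\int K_i(x)x^k\varphi(x)\di x = 0$ for $k=0,1,2$, kill all cross terms between the $(A,\kappa,\zeta)$ parameters and the $(a_1,\dots,a_I)$ parameters, while the orthonormality condition~\eqref{eqKxConditions-1}, $\int K_iK_j\varphi\di x = \mathbbm{1}_{\{i=j\}}$, makes the $(a_i,a_j)$ block equal to the identity $I_I$. The remaining $3\times3$ block in the $(A,\kappa,\zeta)$ variables has entries determined by the Gaussian moments $\int x^k\varphi\di x$ and is easily checked to be invertible, with determinant $2\sqrt{2\pi}\neq 0$. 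Hence $D_\theta\Phi$ is invertible and the implicit function theorem yields a locally unique, smooth solution map $c\mapsto\theta(c)$.

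Third, I would read off the first-order behaviour from the implicit function theorem formula $D\theta(0) = -\bigl(D_\theta\Phi\bigr)^{-1}D_c\Phi$. Since $c$ enters $\Phi$ only through the $-c_i$ terms, $D_c\Phi$ is the $(I+3)\times I$ matrix whose top $3\times I$ block vanishes and whose bottom block is $-I_I$; combined with the block-diagonal inverse just computed, this makes $D\theta(0)$ the matrix whose $(A,\kappa,\zeta)$ rows all vanish and whose $a$-rows form the identity. In other words $\partial A/\partial c = \partial\kappa/\partial c = \partial\zeta/\partial c = 0$ and $\partial a_i/\partial c_j = \mathbbm{1}_{\{i=j\}}$ at $c=0$. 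A second-order Taylor expansion of the (twice continuously differentiable) solution map around $c=0$ then shows that $A(c)-1/\sqrt{2\pi}$, $\kappa(c)$ and $\zeta(c)+\tfrac12$ have vanishing linear term, hence are $\mathcal{O}(\norm{c}^2)$, while $a_i(c) = c_i + \mathcal{O}(\norm{c}^2)$; these are precisely the four claimed estimates.

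The main obstacle is purely analytic: justifying that $\Phi$ is well defined and of class $C^2$ in a neighbourhood of the base point, so that the implicit function theorem applies and the Taylor remainder is genuinely $\mathcal{O}(\norm{c}^2)$. This requires differentiating under the integral sign, which brings down factors of $x$, $x^2$ and products $K_iK_jK_k$ against $f_0$. Here I would use the at-most-quadratic growth bound~\eqref{eqKiBoundingConstraint}, the lower bound~\eqref{eqAsmpKBoundedBelow} (which guarantees that $f_0$ remains an integrable, Gaussian-dominated density for $\theta$ near the base point, since the $\zeta x^2$ term dominates), and the cubic integrability condition~\eqref{eqKxConditionM} to build dominating functions and interchange differentiation with integration. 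Everything else, namely computing the Gaussian moments and inverting a $3\times3$ matrix, is routine.
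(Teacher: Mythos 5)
Your proposal is correct and follows essentially the same route as the paper: the same implicit-function-theorem argument applied to the same system of $I+3$ constraint equations at the base point $(1/\sqrt{2\pi},\,0,\,-\tfrac12,\,0,\dots,0)$, with the block-diagonal Jacobian forced by~\eqref{eqKxConditions} and the first-order expansion $D\theta(0) = -\bigl(D_\theta\Phi\bigr)^{-1}D_c\Phi$ yielding the vanishing linear terms and $a_i = c_i + \mathcal{O}(\norm{c}^2)$. If anything you are more careful than the paper, which simply assumes the constraint map is twice differentiable where you sketch the domination argument needed to differentiate under the integral sign.
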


\begin{proof}
Define $x = (c_1, \ldots, c_I)^\top\in \R^I$ and
$y = (A, \kappa, \zeta, a_1, \ldots, a_I)^\top\in \R^{I+3}$.
Furthermore, let $F\colon \R^I \times \R^{I+3} \rightarrow \R^{I+3}$ be given by
\begin{equation*}
F(x, y) =
    \begin{pmatrix}
        \int f_0(x) \di x - 1\\
        \int f_0(x) x \di x\\
        \int f_0(x) x^2 \di x - 1\\
        \int f_0(x) K_1(x) \di x - c_1\\
        \vdots\\
        \int f_0(x) K_I(x) \di x - c_I
    \end{pmatrix},
\end{equation*}
where $f_0$ is given in~\eqref{eqMaxEntropyDensity} and $K_i$
in~\eqref{eqKx}.  Then, for the points $x_1 = (0, \ldots, 0)^\top$ and
$y_1 = (\tfrac{1}{\sqrt{2 \pi}}, 0, -\tfrac{1}{2}, 0, \ldots,
0)^\top$, we have $F(x_1, y_1) = 0$.

Assuming $F$ is twice differentiable, we use the Implicit Function
Theorem \citep[see, for example,][]{oliveira2014implicit} around
$(x_1, y_1)$.  First, we need to show $D_{y} F(x_1, y_1)$ is
invertible. We have
\begin{equation*}
D_{y} F(x_1, y_1) =
\begin{pmatrix}
    M & 0\\
    0 & -I_I
\end{pmatrix},
\quad \text{with, }
M =
\begin{pmatrix}
    \sqrt{2} & 0 & 1\\
    0 & 1 & 0\\
    1 & 0 & 4
\end{pmatrix}.
\end{equation*}
Therefore, $D_{y} F(x_1, y_1)$ is non-singular, and so the Implicit
Function Theorem holds.
There exist some open set $\mathcal{U} \subset \R^I$ and a
unique continuously differentiable function
$g\colon \mathcal{U} \rightarrow \R^{I + 3}$ such that $g(x_1) = y_1$ and
$F\bigl(x, \,g(x) \bigr) = 0$ for all $x \in\mathcal{U}$.
Then,
\begin{equation}\label{eqIFTDg}
        D g(x)
            = -D_y F\bigl(x, \,g(x)\bigr)^{-1}
                D_x F\bigl(x, \, g(x) \bigr).
\end{equation}
As $g$ is continuous in the set $\mathcal{U}$, there exists some $\varepsilon > 0$,
such that for all $c \in \mathcal{U}$ with $\norm{c} < \varepsilon$,
$g(x_1 + c) = y_1 + d$ for some $d \in \R^{I+3}$.
Using Taylor series we can expand $g$ around $x_1 = 0 \in \R^I$ to obtain
$g(x_1 + c) = g(x_1) + D g(x_1) \, c
        + \mathcal{O}(\norm{c}^2)$, and
\[
        D g(x_1) = \frac{d + \mathcal{O}(\norm{c}^2)}{c}.
\]
Putting this together with~\eqref{eqIFTDg} at $x = x_1$ and rearranging gives,
\[
        d
            = - D_{y} F(x_1, y_1)^{-1}
                D_{x} F(x_1, y_1) \,c
                + \mathcal{O}(\norm{c}^2).
\]
Now, since
\[
        D_x F(x_1, y_1) =
            \begin{pmatrix}
                0 & \cdots & 0\\
                0 & \cdots & 0\\
                0 & \cdots & 0\\
                & I_I &
            \end{pmatrix}
            \in \R^{(I+3) \times I},
\]
one easily obtains that
\[
        d =
            \begin{pmatrix}
                0 & \cdots & 0\\
                0 & \cdots & 0\\
                0 & \cdots & 0\\
                & I_I &
            \end{pmatrix}
            c
            + \mathcal{O}(\norm{c}^2),
\]
and so,
\[
        y_1 + d
            = \begin{pmatrix}
                \frac{1}{\sqrt{2 \pi}}\\
                0\\
                -\frac{1}{2}\\
                c_1\\
                \vdots\\
                c_I
            \end{pmatrix}
                + \mathcal{O}(\norm{c}^2), \text{ as } c \rightarrow 0.
\]
This completes the proof.
\end{proof}

We now define the following functions $y(\cdot)$ and $r(\cdot)$ for future use.
Let $y\colon \R \rightarrow \R$ be given by
\begin{equation}\label{eqDefnFunctionY}
        y(x) := \kappa x + (\zeta + \frac{1}{2}) x^2 + \sum_{i=1}^I a_i K_i(x),
\end{equation}
and $r\colon \R \rightarrow \R$ given by
\begin{equation}\label{eqDefnFunctionR}
        r(x) := e^x - 1 - x.
\end{equation}
Using these definitions, we can write $f_0$, given in
Proposition~\ref{propf0MaxEntropy}, as
\begin{equation}\label{eqf0Rearranged}
        f_0(x) = \varphi(x) \cdot \sqrt{2 \pi} A e^{y(x)}.
\end{equation}
The following lemmas are two technical results needed in the proof of Theorem~
\ref{theoremSupremumf0Hatf0}.

\begin{lemma}\label{lemmaSupremumConvexFunctions}
Let $g\colon \R \rightarrow \R$ and $l\colon \R \rightarrow \R$ be any
functions and $h\colon \R \rightarrow \R_{+}$ be convex with $h(0) = 0$.
Then,
\begin{equation*}
    \sup_{x \in \R} \bigl \lvert l(x) h(\varepsilon g(x)) \bigr \rvert
        \leq \varepsilon \,\sup_{x \in \R} \bigl \lvert l(x) h(g(x)) \bigr \rvert
\end{equation*}
for all $\varepsilon \in [0, 1]$.
\end{lemma}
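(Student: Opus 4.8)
The plan is to reduce the whole statement to a single pointwise inequality and then pass to the supremum. The key structural fact is that a nonnegative convex function $h$ with $h(0)=0$ is \emph{star-shaped at the origin}, meaning that $h(\varepsilon s) \le \varepsilon h(s)$ for every $s \in \R$ and every $\varepsilon \in [0,1]$. Everything else is bookkeeping: none of the hypotheses on $g$ or $l$ (which are completely arbitrary) play a role beyond being evaluated at each point $x$, so there is no regularity, measurability, or integrability issue to address.

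First I would establish the star-shapedness inequality. Fixing $s \in \R$ and $\varepsilon \in [0,1]$, I would write $\varepsilon s = \varepsilon\, s + (1-\varepsilon)\, 0$ as a convex combination of $s$ and $0$, and apply the definition of convexity of $h$ to obtain
\[
  h(\varepsilon s) \le \varepsilon\, h(s) + (1-\varepsilon)\, h(0) = \varepsilon\, h(s),
\]
where the final equality uses $h(0)=0$. Next I would promote this to the claimed bound. For each fixed $x \in \R$, I apply the inequality with $s = g(x)$. Since $h$ takes values in $\R_+$, both $h(\varepsilon g(x))$ and $h(g(x))$ are nonnegative, so the absolute values around $h$ are harmless and $|h(\varepsilon g(x))| = h(\varepsilon g(x)) \le \varepsilon\, h(g(x)) = \varepsilon\, |h(g(x))|$. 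Multiplying through by $|l(x)| \ge 0$ gives the pointwise estimate $|l(x)\, h(\varepsilon g(x))| \le \varepsilon\, |l(x)\, h(g(x))|$, valid for every $x$. Taking the supremum over $x \in \R$ on both sides, and noting that $\varepsilon \ge 0$ can be pulled outside the supremum, yields exactly the stated inequality.

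I do not expect any real obstacle in this lemma; it is a short convexity argument. The only point that requires a moment's care is recognising that the nonnegativity of $h$ (its codomain being $\R_+$) is precisely what allows the absolute value bars around $h$ to be removed, so that the convexity inequality $h(\varepsilon s) \le \varepsilon h(s)$ can be invoked directly rather than through $|h|$, for which convexity would not suffice. With that observation in place the proof is essentially immediate.
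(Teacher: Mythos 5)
Your proof is correct and follows essentially the same route as the paper: both use convexity of $h$ at the convex combination $\varepsilon s + (1-\varepsilon)\cdot 0$ together with $h(0)=0$ to get $h(\varepsilon g(x)) \le \varepsilon h(g(x))$, then use nonnegativity of $h$ to pass to absolute values and take the supremum. Your write-up is if anything slightly more explicit about why the absolute value bars are harmless.
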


\begin{proof}
As $h$ is convex, for all $\lambda \in [0,1]$ and for all $x, y \in \R$, we have
$h\bigl(\lambda x + (1 - \lambda) y\bigr) \leq \lambda h(x) + (1 - \lambda) h(y)$.
Let $\varepsilon \in [0,1]$. Then, substituting $\lambda = \varepsilon$,
$x = g(x)$ and $y = 0$, we have
$h\bigl(\varepsilon \, g(x)\bigr) \leq \varepsilon \,h\bigl(g(x)\bigr)$,
for all $g(x) \in \R$, as $h(0) = 0$. Noticing that $h$ maps to the positive real
line allows to conclude.
\end{proof}

\begin{lemma}\label{lemmaRxSupremumBound}
        Let $r\colon \R \rightarrow \R_{+}$ be given as in~\eqref{eqDefnFunctionR}.
Then,
\begin{equation}\label{eqRxBound}
    r(\varepsilon \,y) \leq \varepsilon^2 r(y),
        \text{ for all $y \geq 0$, and for all $\varepsilon \in [0, 1]$.}
\end{equation}
Moreover, for any function $l\colon \R \rightarrow \R$, we have
\begin{equation*}
    \sup_{x \in \R} \Bigl \lvert l(x)
        r \bigl(\varepsilon (1 + x^2)\bigr) \Bigr \rvert
        \leq \varepsilon^2 \sup_{x \in \R} \Bigl \lvert l(x) r(1 + x^2) \Bigr \rvert.
\end{equation*}
\end{lemma}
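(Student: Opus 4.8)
The plan is to prove the pointwise bound~\eqref{eqRxBound} first and then obtain the supremum inequality as an immediate pointwise consequence.

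First I would establish~\eqref{eqRxBound}. The main subtlety is the exponent: because $r$ is convex with $r(0) = 0$, a direct application of Lemma~\ref{lemmaSupremumConvexFunctions} would only deliver a factor of $\varepsilon$, whereas we need $\varepsilon^2$. The extra power comes from the fact that $r$ vanishes to \emph{second} order at the origin, i.e.\ $r'(0) = 0$ as well. The cleanest way to exploit this is the power series
\begin{equation*}
        r(x) = e^x - 1 - x = \sum_{k = 2}^\infty \frac{x^k}{k!},
\end{equation*}
every term of which has degree at least two. Replacing $x$ by $\varepsilon y$ and using that $\varepsilon^k \leq \varepsilon^2$ for all $k \geq 2$ whenever $\varepsilon \in [0, 1]$, together with $y^k \geq 0$ for $y \geq 0$, gives
\begin{equation*}
        r(\varepsilon y) = \sum_{k = 2}^\infty \frac{\varepsilon^k y^k}{k!}
                \leq \varepsilon^2 \sum_{k = 2}^\infty \frac{y^k}{k!}
                = \varepsilon^2 r(y),
\end{equation*}
which is~\eqref{eqRxBound}; the termwise comparison and the rearrangement are justified by the nonnegativity of every summand.

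Next I would deduce the supremum bound by applying~\eqref{eqRxBound} pointwise with $y = 1 + x^2$. This choice satisfies the hypothesis $y \geq 0$ for every $x \in \R$ (indeed $y \geq 1$). Since $r$ takes values in $\R_{+}$, the absolute value factorises, and for each fixed $x$ we obtain
\begin{equation*}
        \bigl\lvert l(x) \, r\bigl(\varepsilon (1 + x^2)\bigr) \bigr\rvert
                = \lvert l(x) \rvert \, r\bigl(\varepsilon (1 + x^2)\bigr)
                \leq \varepsilon^2 \lvert l(x) \rvert \, r(1 + x^2)
                = \varepsilon^2 \bigl\lvert l(x) \, r(1 + x^2) \bigr\rvert.
\end{equation*}
Taking the supremum over $x \in \R$ and pulling the constant $\varepsilon^2$ outside the supremum on the right-hand side then yields the claim.

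The only real obstacle is the first step: one has to recognise that the generic convexity estimate is too weak and that it is the double zero of $r$ at the origin which supplies the quadratic factor. After that the argument is routine, and the transition to the supremum form is automatic.
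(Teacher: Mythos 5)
Your proposal is correct and follows essentially the same route as the paper: both establish~\eqref{eqRxBound} by expanding $r$ in its power series, noting every term has degree at least two so that $\varepsilon^k \leq \varepsilon^2$ for $k \geq 2$ yields the termwise bound (the paper phrases this as showing the difference $r(\varepsilon y) - \varepsilon^2 r(y) \leq 0$), and both then obtain the supremum inequality by applying the pointwise bound with $y = 1 + x^2$ and using the nonnegativity of $r$. No substantive difference.
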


\begin{proof}
We will use the Taylor expansion of the exponential around $0$
for both the
left-hand and right-hand side of~\eqref{eqRxBound}.
The left-hand side gives,
\begin{align*}
    r(\varepsilon\, y)
        &= \exp(\varepsilon\,y) - 1 - \varepsilon\,y\\
        &= \sum_{n = 0}^\infty \frac{\varepsilon^n}{n!} \,y^n - 1 -
            \varepsilon\,y, \quad
            \text{absolutely convergent for all } \varepsilon y \in \R\\
        &= \varepsilon^2\, \Bigl(\sum_{n = 2}^\infty
            \frac{\varepsilon^{n-2}}{n!} y^n\Bigl)
        \intertext{and the right-hand side of~\eqref{eqRxBound} gives,}
    \varepsilon^2\, r(y)
        &= \varepsilon^2\, \Bigl(\sum_{n = 0}^\infty \frac{1}{n!} \,y^n
            - 1 - y\Bigl)
        = \varepsilon^2\, \Bigl(\sum_{n = 2}^\infty \frac{1}{n!} \,y^n\Bigr).
\end{align*}
Putting these two results together,
\begin{equation*}
r(\varepsilon\,y) - \varepsilon^2\, r(y)
    = \varepsilon^2 \Bigl(\sum_{n = 2}^\infty
        \frac{1}{n!}\,y^n (\varepsilon^{n-2} - 1)\Bigr)
    \leq 0,
\end{equation*}
        as $\varepsilon^n - 1 \leq 0$ for all $\varepsilon \in [0,1]$ and
        $n \in \N_{+}$.
This proves~\eqref{eqRxBound}.

Let $l\colon \R \to \R$ be some function. Then, as $r$ maps to the
positive real line and using~\eqref{eqRxBound} with $y = 1 + x^2$, we have
$ \lvert l(x) r\bigl(\varepsilon (1 + x^2)\bigr) \rvert
    \leq \varepsilon^2 \lvert l(x) r(1 + x^2) \rvert$,
for all $x \in \R$. Taking the supremum over the real line we conclude.
\end{proof}

        We now consider the error term between the density $f_0$ that maximises
        entropy, and its estimate~$\hat f_0$.%

\begin{theorem}\label{theoremSupremumf0Hatf0}
        Suppose we have functions $K_i$, $i = 1, 2, \ldots, I$ that satisfy
        Assumptions~\ref{assumptionGK} and~\ref{as1K}.
        Let $f_0$ be given as in Proposition~\ref{propf0MaxEntropy}, and
        let $\hat f_0$ be given by
        \[
                \hat{f}_0(x) = \varphi(x)\Bigl(1 + \sum_{i=1}^I c_i K_i(x)\Bigr).
        \]
        Then,
        \begin{align*}
                \sup_{x \in \R} \bigl \lvert e^{\delta x^2} \bigl(f_0(x) - \hat f_0(x) \bigr)
            \bigr \rvert = \mathcal{O}(\norm{c}^2) \text{ as } c \rightarrow 0,
        \end{align*}
        for all $\delta < 1/2$.
\end{theorem}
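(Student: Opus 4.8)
The plan is to exploit the representation~\eqref{eqf0Rearranged}, which writes $f_0(x) = \varphi(x)\,\sqrt{2\pi}\,A\,e^{y(x)}$ with $y$ as in~\eqref{eqDefnFunctionY}, so that after factoring out $\varphi$ the whole estimate becomes a statement about a single bracketed term. Since $\hat f_0(x) = \varphi(x)\bigl(1 + \sum_i c_i K_i(x)\bigr)$, writing $S(x) := \sum_{i=1}^I c_i K_i(x)$ and using $\varphi(x) = (2\pi)^{-1/2}e^{-x^2/2}$ gives
\[
  e^{\delta x^2}\bigl(f_0(x) - \hat f_0(x)\bigr)
  = \frac{1}{\sqrt{2\pi}}\,e^{(\delta - 1/2)x^2}\Bigl(\sqrt{2\pi}A\,e^{y(x)} - 1 - S(x)\Bigr).
\]
The decisive structural feature is that the residual weight $e^{(\delta-1/2)x^2}$ decays like a Gaussian precisely because $\delta < 1/2$. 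I would first record the two elementary facts that drive everything: by Proposition~\ref{propNormSquaredIFT} the coefficients satisfy $\sqrt{2\pi}A - 1 = \mathcal{O}(\norm{c}^2)$, $\kappa = \mathcal{O}(\norm{c}^2)$, $\zeta + \tfrac12 = \mathcal{O}(\norm{c}^2)$ and $a_i - c_i = \mathcal{O}(\norm{c}^2)$; and by~\eqref{eqKiBoundingConstraint} together with $\abs{x}\le \tfrac12(1+x^2)$ one obtains a direction-uniform bound $\abs{y(x)} \le C\norm{c}(1+x^2)$ for all small $\norm{c}$, with the linear $c_iK_i$ terms dominating.

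Next I would expand via $e^{y} = 1 + y + r(y)$ with $r$ as in~\eqref{eqDefnFunctionR} and split the bracket as
\[
  \sqrt{2\pi}A\,e^{y(x)} - 1 - S(x)
  = \bigl(\sqrt{2\pi}A - 1\bigr) + \bigl(\sqrt{2\pi}A\,y(x) - S(x)\bigr) + \sqrt{2\pi}A\,r(y(x)).
\]
The first summand is $\mathcal{O}(\norm{c}^2)$ and, against the bounded weight $e^{(\delta-1/2)x^2}$, contributes $\mathcal{O}(\norm{c}^2)$ to the supremum. For the middle summand I would write $\sqrt{2\pi}A\,y - S = (y - S) + (\sqrt{2\pi}A - 1)y$; here $y - S = \kappa x + (\zeta+\tfrac12)x^2 + \sum_i(a_i - c_i)K_i$ has all coefficients $\mathcal{O}(\norm{c}^2)$, so $\abs{y(x)-S(x)} \le \mathcal{O}(\norm{c}^2)(1+x^2)$, while $(\sqrt{2\pi}A-1)y = \mathcal{O}(\norm{c}^3)(1+x^2)$. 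In both cases $e^{(\delta-1/2)x^2}(1+x^2)$ has finite supremum, so these terms are $\mathcal{O}(\norm{c}^2)$ and $\mathcal{O}(\norm{c}^3)$ respectively.

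The main obstacle is the remainder $\sqrt{2\pi}A\,r(y(x))$, and the delicate point is that $r(y)\sim e^{y}$ \emph{grows}, so the smallness must come simultaneously from $r$ vanishing quadratically at the origin and from the residual weight beating this growth. I would argue pointwise: $r$ is convex and nonnegative with minimum $0$ at the origin (hence increasing on $[0,\infty)$ and satisfying $r(t)\le r(\abs{t})$) and obeys the elementary bound $r(t) \le \tfrac12 t^2 e^{t}$ for $t \ge 0$, so $\abs{y(x)}\le C\norm{c}(1+x^2)$ yields
\[
  r(y(x)) \le r\bigl(C\norm{c}(1+x^2)\bigr)
  \le \tfrac12 C^2\norm{c}^2 (1+x^2)^2\, e^{C\norm{c}(1+x^2)}.
\]
Multiplying by $e^{(\delta-1/2)x^2}$ produces the factor $(1+x^2)^2 e^{(\delta - 1/2 + C\norm{c})x^2}$, and here $\delta<1/2$ is used crucially: once $\norm{c}$ is small enough that $C\norm{c} \le \tfrac12(\tfrac12 - \delta)$, the exponent is bounded above by $-\tfrac12(\tfrac12-\delta)x^2$, so the supremum over $x$ is finite and uniformly bounded in $\norm{c}$. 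This term is therefore $\mathcal{O}(\norm{c}^2)$, and combining with the two easy terms completes the proof. This last step is exactly the quantitative content packaged in Lemmas~\ref{lemmaSupremumConvexFunctions} and~\ref{lemmaRxSupremumBound}, which I would invoke in place of the explicit $r(t)\le\tfrac12 t^2e^t$ estimate if a cleaner exposition is desired.
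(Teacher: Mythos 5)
Your proposal is correct and follows essentially the same route as the paper's proof: factor out $\varphi$, expand $e^{y}=1+y+r(y)$, control the linear terms via Proposition~\ref{propNormSquaredIFT} and the quadratic-growth bound~\eqref{eqKiBoundingConstraint}, and beat the remainder's $e^{\mathcal{O}(\norm{c})(1+x^2)}$ growth with the Gaussian weight $e^{(\delta-1/2)x^2}$. The only differences are cosmetic: you regroup one $\mathcal{O}(\norm{c}^3)$ cross term and replace the appeal to Lemma~\ref{lemmaRxSupremumBound} by the equivalent elementary estimate $r(t)\le\tfrac12 t^2 e^t$ for $t\ge 0$.
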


\begin{proof}
        Let $\varphi(x) = (2 \pi)^{1/2} e^{-x^2 / 2}$ be the density of a standard
Gaussian random variable and let the function $g\colon \R \rightarrow
\R$ be defined by
\[
        g(x) := \frac{f_0(x) - \hat{f}_0(x)}{\varphi(x)}.
        \]
Then, with $y\colon \R \rightarrow \R$ as defined in~\eqref{eqDefnFunctionY} and
using~\eqref{eqf0Rearranged} we get,
\begin{align*}
        g(x)
        &= \sqrt{2 \pi} A \exp\bigl(y(x)\bigr) -
        \bigl(1 + \sum_{i=1}^I c_i K_i(x)\bigr)\nonumber\\
        &= \sqrt{2 \pi} A \Bigl(\exp\bigl(y(x)\bigr) - 1 - y(x)\Bigr)
                + \sqrt{2 \pi}A\bigl(1 + y(x)\bigr)
                - (1 + \sum_{i=1}^I c_i K_i(x))\nonumber\\
            &\quad+ \sqrt{2 \pi} A \bigl(\sum_{i=1}^I c_i K_i(x)
        - \sum_{i=1}^I c_i K_i(x)\bigr).\nonumber
\end{align*}
Rearranging this using the function $r\colon \R \rightarrow \R$ given in~
\eqref{eqDefnFunctionR} and by expanding $y(x)$ gives,
\begin{align*}
        g(x)
        &= \sqrt{2 \pi} A \cdot r\bigl(y(x)\bigr)
                + \sqrt{2 \pi} A \cdot \Bigl(\kappa x
            + \bigl(\zeta + \frac{1}{2}\bigr) x^2\Bigr)\\
                &\quad + \bigl(\sqrt{2 \pi} A - 1\bigr)\sum_{i = 1}^I c_i K_i(x)
                + \sqrt{2 \pi} A \sum_{i = 1}^I (a_i - c_i) K_i(x)
                + (\sqrt{2 \pi}A - 1).
\end{align*}
Note that the absolute value of $g(x)$ can be bounded by the following terms,
\begin{align*}
        \lvert g(x) \rvert
                &\leq
                \sqrt{2 \pi} A\, \lvert r\bigl(y(x)\bigr) \rvert +
                \sqrt{2 \pi} A\, \lvert \kappa x \rvert +
                \sqrt{2 \pi} A\, \lvert \zeta + \frac{1}{2} \rvert x^2\\
                \hskip -1cm & \hskip 1cm +
                \sqrt{2 \pi} A\, \bigl \lvert \sum_{i=1}^I (a_i - c_i) K_i(x) \bigr \rvert +
                \lvert \sqrt{2 \pi} A - 1 \rvert\, \bigl \lvert \sum_{i=1}^I c_i K_i(x) \bigr \rvert+
                \lvert \sqrt{2 \pi}A - 1 \rvert.
\end{align*}
We have,
\begin{align*}
\begin{split}
\gdef\sumciK{\sum_{i=1}^I c_i K_i(x)}
        \abs{f_0(x) - \hat f_0(x)} &=
        \abs{\varphi(x) \cdot g(x)}\\
        &= \varphi(x) \Bigl \lvert \sqrt{2 \pi} A r\bigl(y(x)\bigr)
                + \sqrt{2 \pi} A \Bigl(\kappa x + \bigl(\zeta + \frac{1}{2}\bigr) x^2\Bigr)
                        + (\sqrt{2 \pi} A - 1) \sumciK \Bigr.\\
                        &\hskip 1.5cm \Bigl. + \sqrt{2 \pi} A \sum_{i=1}^I (a_i - c_i) K_i(x)
                        + (\sqrt{2 \pi} A - 1)\Bigr \rvert.
\end{split}
\end{align*}
We now multiply both sides by $e^{\delta x^2}$
and setting $\tilde \delta = \frac{1}{2} - \delta$, so that
$e^{\delta x^2} \varphi(x) = (2 \pi)^{-1/2} e^{-\tilde \delta x^2}$, we have
\begin{align}
\gdef\sumciK{\sum_{i=1}^I c_i K_i(x)}
        \bigl \lvert e^{\delta x^2} \bigl(f_0(x) - \hat f_0(x)\bigr) \bigr \rvert
                &= (2 \pi)^{-1/2} e^{-\tilde \delta x^2}
        \Bigl \lvert \sqrt{2 \pi} A r\bigl(y(x)\bigr)
                + \sqrt{2 \pi} A \Bigl(\kappa x +
        \bigl(\zeta + \frac{1}{2}\bigr) x^2\Bigr)\nonumber\\
                        &\hskip 2.8cm  \Bigl. + (\sqrt{2 \pi} A - 1) \sumciK
                        + \sqrt{2 \pi} A \sum_{i=1}^I (a_i - c_i) K_i(x)\nonumber\\
                        &\hskip 2.8cm + (\sqrt{2 \pi} A - 1)\Bigr
            \rvert\nonumber\\
        &\leq T_1(x) + T_2(x) + \frac{1}{\sqrt{2 \pi}} \cdot T_3(x)
            + T_4(x) + \frac{1}{\sqrt{2 \pi}} \cdot T_5(x),\label{eqf0Hatf0LeqTterms}
\end{align}
where,
\begin{align*}
        T_1(x) &:= \bigl \lvert A e^{- \tilde \delta x^2}
                        r\bigl(y(x)\bigr) \bigr \rvert;\\
        T_2(x) &:= \bigl \lvert A e^{- \tilde \delta x^2}
                        \bigl(\kappa x + (\zeta + \frac{1}{2}) x^2 \bigr) \bigr
                        \rvert;\\
        T_3(x) &:= \bigl \lvert (\sqrt{2 \pi} A - 1) e^{- \tilde \delta x^2}
                        \sum_{i=1}^I c_i K_i(x) \bigr \rvert;\\
        T_4(x) &:= \bigl \lvert A e^{- \tilde \delta x^2}
                        \sum_{i=1}^I (a_i - c_i) K_i(x) \bigr \rvert;\\
    T_5(x) &:= \lvert e^{-\tilde \delta x^2} (\sqrt{2 \pi} A - 1) \rvert.
\end{align*}

If we show that $\norm{T_i}_{\infty}$ is at least of order $\norm{c}^2$ as
$c \rightarrow 0$ for $i = 1, \ldots, 5$, then we can conclude the proof by
taking the supremum of~\eqref{eqf0Hatf0LeqTterms} over $x \in \R$, which gives,
\[
        \sup_{x \in \R} \bigl \lvert e^{\delta x^2} (f_0(x) - \hat f_0(x)) \bigr \rvert
            = \mathcal{O}(\norm{c}^2),
    \]
as $c \rightarrow 0$.

\textbf{Term $\bm{T_1}$}. First, note that
\begin{equation*}
         \bigl \lvert e^{- \tilde \delta x^2} r(y(x)) \bigr \rvert
         \leq \max_{\sigma \in \{-1, 1\}} \bigl \lvert
        e^{- \tilde \delta x^2} r(\sigma \cdot \lvert y(x) \rvert)
        \bigr \rvert, \text{ for all } x \in \R,
\end{equation*}
and thus,
\begin{equation}\label{eqT1Supremum}
        \sup_{x \in \R} \lvert T_1(x) \rvert
            \leq A \cdot
                \sup_{{\substack{x \in \R \\ \sigma \in \{-1, 1\}}}}
            \lvert e^{- \tilde \delta x^2} r(\sigma \cdot \lvert y(x) \rvert)
            \rvert.
\end{equation}
Next we choose $\gamma$ such that,
\begin{equation}\label{eqT1Gamma}
        q_1 :=
        \sup_{{\substack{x \in \R \\ \sigma \in \{-1, 1\}}}}
        \bigl \lvert e^{- \tilde \delta x^2}
        r\bigl(\sigma \cdot \gamma (1 + x^2)\bigr) \bigr \rvert < \infty.
\end{equation}
This is always possible for some $\gamma \in (- \tilde \delta, \tilde \delta)$,
as $r(0) = 0$, and
since $e^{- \tilde
\delta x^2} r\bigl(\pm \gamma (1 + x^2)\bigr)$ is continuous and $r\bigl(\pm \gamma (1 +
x^2)\bigr)$ grows no faster that $e^{\gamma x^2}$ as $x \rightarrow \pm\infty$,
it is beaten by $e^{- \tilde \delta x^2}$ in the tails.\\
For $y(x)$ as given in~\eqref{eqDefnFunctionY} and using~
\eqref{eqKiBoundingConstraint} we can find an upper
bound by
\begin{align}
        \lvert y(x) \rvert
                &\leq \lvert \kappa \rvert \cdot \Bigl(\frac{1 + x^2}{2} \Bigr) + \lvert \zeta
        + \frac{1}{2} \rvert \cdot (1 + x^2) + \sum_{i=1}^I \lvert a_i \rvert B_i (1
        + x^2)\nonumber\\
                &= \gamma (1 + x^2) \cdot
                        \frac{1}{\gamma} \Bigl( \frac{1}{2} \lvert \kappa \rvert
            + \lvert \zeta + \frac{1}{2} \rvert
            + \sum_{i=1}^I \lvert a_i \rvert B_i \Bigr)\nonumber\\
        &=: \gamma (1 + x^2) \cdot \varepsilon_1,\label{eqT1YBoundEpsilon}
\end{align}
where $\gamma$ is such that~\eqref{eqT1Gamma} holds.
As $c \rightarrow 0$, we have by Proposition~\ref{propNormSquaredIFT}, $\kappa
\rightarrow 0$, $\zeta \rightarrow -1/2$ and $a_i \rightarrow c_i$. Therefore, we
can choose $c$ small enough (and depending on $\gamma$)
such that $\varepsilon_1 \in [0,1]$.
Now, from~\eqref{eqT1Supremum},~\eqref{eqT1YBoundEpsilon}, the fact that $r$ is
convex with a minimum at zero, and by
Lemma~\ref{lemmaRxSupremumBound} we get
\begin{align*}
    \sup_{x \in \R} \bigl \lvert T_1(x) \bigr \rvert
                &\leq A \sup_{{\substack{x \in \R \\ \sigma \in \{-1, 1\}}}}
            \bigl \lvert e^{- \tilde \delta  x^2}
            r\bigl( \sigma \gamma (1 + x^2) \varepsilon_1 \bigr) \bigr \rvert\\
                &\leq \varepsilon_1^2 A \sup_{{\substack{x \in \R \\ \sigma \in \{-1, 1\}}}}
            \bigl \lvert e^{- \tilde \delta  x^2}
            r\bigl( \sigma \gamma (1 + x^2) \bigr) \bigr \rvert\\
        &= \varepsilon_1^2\, A \, q_1.
\end{align*}
By Proposition~\ref{propNormSquaredIFT}, we have $A \rightarrow 1 / \sqrt{2 \pi}$ as
$c \rightarrow 0$ and,
\begin{equation*}
        \varepsilon_1 = \frac{1}{\gamma} \Bigl(
            \frac{1}{2} \lvert \kappa \rvert + \lvert \zeta + \frac{1}{2} \rvert
            + \sum_{i=1}^I \lvert a_i \rvert B_i
            \Bigr)
        = \mathcal{O}(\norm{c}), \text{ as } c \rightarrow 0,
\end{equation*}
and therefore $\varepsilon_1^2 = \mathcal{O}(\norm{c}^2)$ as $c \rightarrow 0$, and
$\norm{T_1}_{\infty} = \mathcal{O}(\norm{c}^2)$ as $c \rightarrow 0$.

\textbf{Term $\bm{T_2}$}. We proceed similarly as for $T_1$, and look for some
$\varepsilon_2 \in [0, 1]$ such that
$\lvert \kappa x + (\zeta + \frac{1}{2}) x^2 \rvert \leq \varepsilon_2 (1 + x^2)$.
We have,
\begin{align*}
\bigl \lvert \kappa x + (\zeta + \frac{1}{2}) x^2 \bigr \rvert
    &\leq \lvert \kappa \rvert (\frac{1 + x^2}{2})
        + \lvert \zeta + \frac{1}{2} \rvert (1 + x^2)\\
    &= \bigl(\frac{1}{2} \lvert \kappa \rvert
        + \lvert \zeta + \frac{1}{2} \rvert \bigr) (1 + x^2).
\end{align*}
Setting $\varepsilon_2 := (\frac{1}{2} \lvert \kappa \rvert + \lvert \zeta +
\frac{1}{2} \rvert)$, by Proposition~\ref{propNormSquaredIFT},
$\varepsilon_2 = \mathcal{O}(\norm{c}^2)$ as $c \rightarrow 0$,
and thus we can choose $c$ sufficiently small such that $\varepsilon_2 \leq 1$.
Let,
\begin{equation*}
        q_2 := \sup_{x \in \R} \bigl \lvert
            e^{-\tilde \delta x^2} (1 + x^2) \bigr \rvert < \infty,
\end{equation*}
where $q_2 < \infty$ since $e^{- \tilde \delta x^2} (1 + x^2)$ is
continuous and tends to zero in the tails.
From this, for $\varepsilon_2 \in [0, 1]$ as above, we can apply
Lemma~\ref{lemmaSupremumConvexFunctions} and get
\begin{align*}
    \sup_{x \in \R} \bigl \lvert e^{- \tilde \delta x^2}
        (\kappa x + (\zeta + \frac{1}{2}) x^2) \bigr \rvert
        &\leq A \sup_{x \in \R} \bigl \lvert e^{\tilde \delta x^2}
        \varepsilon_2 (1 + x^2) \bigr \rvert \\
        &\leq \varepsilon_2 \sup_{x \in \R}
        \bigl \lvert e^{-\tilde \delta x^2} (1 + x^2) \bigr \rvert
        = \varepsilon_2 \, q_2.
\end{align*}
Then,
\[
\sup_{x \in \R} \bigl \lvert T_2(x) \bigr \rvert
    = A \sup_{x \in \R} \bigl \lvert e^{\tilde \delta x^2}
        (\kappa x + (\zeta + \frac{1}{2}) x^2) \bigr \rvert
    \leq A \, \varepsilon_2 \, q_2.
\]
Therefore, we have $\norm{T_2}_{\infty} = \mathcal{O}(\norm{c}^2)$, as $c \rightarrow 0$.

\textbf{Term $\bm{T_3}$}. As with the $T_2$ term, we want an
$\varepsilon_3 \in [0, 1]$, such that
$\lvert \sum_{i=1}^I c_i K_i(x) \rvert \leq \varepsilon_3 (1 + x^2)$,
so that we can apply Lemma~\ref{lemmaSupremumConvexFunctions} to show
\begin{equation*}
    \sup_{x \in \R} \bigl \lvert e^{- \tilde \delta x^2}
        \sum_{i=1}^I c_i K_i(x) \bigr \rvert
        \leq \varepsilon_3 \sup_{x \in \R}
            \bigl \lvert e^{-\tilde \delta x^2} (1 + x^2) \bigr \rvert < \infty.
\end{equation*}

First, note that by~\eqref{eqKiBoundingConstraint},
\begin{align*}
\bigl \lvert \sum_{i=1}^I c_i K_i(x) \bigr \rvert
    &\leq \bigl \lvert \sum_{i=1}^I c_i B_i (1 + x^2) \bigr \rvert,\\
    &= \bigl \lvert \sum_{i=1}^I c_i B_i \bigr \rvert \cdot (1 + x^2),
\end{align*}
and thus we set $\varepsilon_3 := \bigl \lvert \sum_{i=1}^I c_i B_i \bigr \rvert$.
Clearly, $\varepsilon_3 = \mathcal{O}(\norm{c})$ as $c \rightarrow 0$.
Now, with $c$ sufficiently small such that $\varepsilon_3 \in [0,1]$, we have
by Lemma~\ref{lemmaSupremumConvexFunctions},
\begin{align*}
\sup_{x \in \R} \bigl \lvert e^{- \tilde \delta x^2}
    \sum_{i=1}^I c_i K_i(x) \bigr \rvert
        &\leq \sup_{x \in \R} \bigl \lvert e^{- \tilde \delta x^2}
            \bigl( \lvert \sum_{i=1}^I c_i B_i \rvert \bigr)
            (1 + x^2) \bigr \rvert\\
    &\leq \lvert \sum_{i=1}^I c_i B_i \rvert
        \cdot \sup_{x \in \R} \bigl \lvert
        e^{- \tilde \delta x^2} (1 + x^2) \bigr \rvert\\
    &\leq \varepsilon_3 \, q_2.
\end{align*}
Therefore,
\begin{align*}
\sup_{x \in \R} \bigl \lvert T_3(x) \bigr \rvert
    & \leq \lvert \sqrt{2 \pi} A - 1 \rvert
        \varepsilon_3 \, q_2
\end{align*}
Thus, $\norm{T_3}_{\infty} = \mathcal{O}(\norm{c}^3)$, as $c \rightarrow 0$,
since $\lvert \sqrt{2 \pi} A - 1 \rvert = \mathcal{O}(\norm{c}^2)$ and
$\varepsilon_3 = \mathcal{O}(\norm{c})$ as $c \rightarrow 0$.

\textbf{Term $\bm{T_4}$}. Similar to the $T_2$ and $T_3$ terms, we want an
$\varepsilon_4 \in [0,1]$ such that
$\sum_{i=1}^I (a_i - c_i) K_i(x) \leq \varepsilon_4 (1 + x^2)$.
Note that
\begin{align*}
\bigl \lvert \sum_{i=1}^I (a_i - c_i) K_i(x) \bigr \rvert
    \leq \bigl \lvert \sum_{i=1}^I (a_i - c_i) B_i \bigr \rvert \cdot (1 + x^2),
\end{align*}
by~\eqref{eqKiBoundingConstraint} and thus we set
$\varepsilon_4 := \lvert \sum_{i=1}^I (a_i - c_i) B_i \rvert$,
and by Proposition~\ref{propNormSquaredIFT},
$\varepsilon_4 = \mathcal{O}(\norm{c}^2)$ as $c \rightarrow 0$.
Choose $c$ small enough such that $\varepsilon_4 \in [0, 1]$. Then, by
Lemma~\ref{lemmaSupremumConvexFunctions},
\begin{align*}
        \sup_{x \in \R} \lvert T_4(x) \rvert
            &\leq A \,\sup_{x \in \R} \bigl \lvert
                e^{- \tilde \delta x^2} \sum_{i = 1}^I (a_i - c_i) B_i (1 + x^2)
                \bigr \rvert\\
            &= \varepsilon_4 \,A \,q_2,
\end{align*}
and since $\varepsilon_4 = \mathcal{O}(\norm{c}^2)$ as $c \rightarrow 0$, we have
$\norm{T_4}_{\infty} = \mathcal{O}(\norm{c}^2)$, as $c \rightarrow 0$.

\textbf{Term $\bm{T_5}$}. Here we can use $e^{\tilde \delta x^2} \leq 1$ for all
$x \in \R$, and from
Proposition~\ref{propNormSquaredIFT} we have
\[
T_5(x) \leq \lvert \sqrt{2 \pi} A - 1 \rvert = \mathcal{O}(\norm{c}^2),
        \text{ as } c\rightarrow 0.
\]
This completes the proof.
\end{proof}

We have therefore shown that for sufficiently small $c$, the
approximation $\hat f_0$ for the density that maximises entropy given
the constraints in~\eqref{eqConstraints} is `close to' $f_0$. We have
also shown that the speed of convergence is of order
$\lVert c \rVert^2$.

\subsection{Step 3.}

We now turn our attention to Step~3.\ of the approximations, where we
find approximations for the entropy and negentropy of $\hat f_0$.  For
these proofs we require that $\hat f_0(x) \geq 0$ for all $x \in \R$,
and thus $\hat f_0$ is a density.

\begin{lemma}[Approximation of Entropy]\label{lemmaEntropyApproximation}
Suppose Assumptions~\ref{assumptionGK} and~\ref{as1K} hold, and let $\hat f_0$ be
given as in Theorem~\ref{theoremSupremumf0Hatf0}. Suppose also
that $\hat f_0(x) \geq 0$ for all $x \in \R$. Then the entropy of $\hat f_0$ satisfies
\begin{equation*}
    H[\hat f_0]
        = \hat H[\hat f_0] + R(\hat f_0),
\end{equation*}
where,
\begin{equation*}
    \hat H[\hat f_0]
        := \eta(1) - \frac{1}{2} \lVert c \rVert^2,
\end{equation*}
with $\eta(\cdot)$ given in~\eqref{eqGaussEntropy},
$c = (c_1, c_2, \ldots, c_I)^\top$, with the
$c_i$ defined in Proposition~\ref{propf0MaxEntropy} and
the remainder term bounded by
\[
    \lvert R(\hat f_0) \rvert
        \leq C \, \tilde M
                \cdot \lVert c \rVert^3,
\]
for some constant $C \in \R$,
and $\tilde M$ given in Assumption~\ref{as1K}.
\end{lemma}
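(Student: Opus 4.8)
The plan is to substitute $\hat f_0 = \varphi\cdot(1+u)$, where I write $u(x) := \sum_{i=1}^I c_i K_i(x) = c^\top K(x)$, directly into the entropy integral and to separate the Gaussian part of $\log\hat f_0$ from the correction $\log(1+u)$. Using $\log\hat f_0(x) = \log\varphi(x) + \log(1+u(x))$ and $\log\varphi(x) = -\tfrac12\log(2\pi) - \tfrac12 x^2$, the entropy splits as
\[
H[\hat f_0] = \tfrac12\log(2\pi)\int \hat f_0(x)\,\di x + \tfrac12\int x^2\,\hat f_0(x)\,\di x - \int \varphi(x)\,(1+u(x))\log(1+u(x))\,\di x.
\]
First I would dispatch the two Gaussian terms. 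Since $\int\varphi K_i = 0$ and $\int x^2\varphi K_i = 0$ by the moment conditions~\eqref{eqKxConditions-2} (with $k=0$ and $k=2$), one gets $\int\hat f_0 = 1$ and $\int x^2\hat f_0 = 1$, so these two contributions equal $\tfrac12(1+\log(2\pi)) = \eta(1)$.

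The heart of the argument is the last term. I would set $\psi(t):=(1+t)\log(1+t)$ and Taylor-expand it to second order with a Lagrange remainder: since $\psi(0)=0$, $\psi'(0)=1$, $\psi''(0)=1$ and $\psi'''(t) = -(1+t)^{-2}$, we have
\[
\psi(u(x)) = u(x) + \tfrac12 u(x)^2 - \frac{u(x)^3}{6\,(1+\xi(x))^2},
\]
for some $\xi(x)$ between $0$ and $u(x)$. Integrating $-\varphi\,\psi(u)$ termwise, the linear part vanishes again by~\eqref{eqKxConditions-2} ($k=0$), while the quadratic part gives
\[
-\tfrac12\int\varphi(x)\,u(x)^2\,\di x = -\tfrac12\sum_{i,j}c_ic_j\int\varphi(x) K_i(x)K_j(x)\,\di x = -\tfrac12\sum_i c_i^2 = -\tfrac12\|c\|^2
\]
by the orthonormality condition~\eqref{eqKxConditions-1}. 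Collecting the pieces identifies the leading part as $\hat H[\hat f_0] = \eta(1) - \tfrac12\|c\|^2$ and leaves the cubic Lagrange term as the remainder $R(\hat f_0) = \tfrac16\int \varphi(x)\,u(x)^3/(1+\xi(x))^2\,\di x$.

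The main obstacle is controlling this remainder, and this is exactly where Assumption~\ref{as1K} enters. For $\|c\|$ small enough that $c^\top c < \varepsilon$, the lower bound~\eqref{eqAsmpKBoundedBelow} gives $u(x) = c^\top K(x)\ge -\tfrac12$, hence $1+u(x)\ge\tfrac12$; since $\xi(x)$ lies between $0$ and $u(x)$ we also have $1+\xi(x)\ge\tfrac12$ and therefore $(1+\xi(x))^{-2}\le 4$. For the cubic itself, $|u(x)| \le \max_i|c_i|\sum_i|K_i(x)| \le \|c\|\sum_i|K_i(x)|$, so
\[
|u(x)|^3 \le \|c\|^3\Bigl(\sum_i|K_i(x)|\Bigr)^3 = \|c\|^3\sum_{i,j,k}|K_i(x)K_j(x)K_k(x)| \le \|c\|^3 M(x)
\]
by~\eqref{eqKxConditionM-1}. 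Combining these two estimates and integrating against $\varphi$ gives
\[
|R(\hat f_0)| \le \tfrac{4}{6}\|c\|^3\int\varphi(x)M(x)\,\di x = \tfrac23\,\tilde M\,\|c\|^3,
\]
using~\eqref{eqKxConditionM-2}, which is the claimed bound with $C=\tfrac23$.

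The only subtlety I would check carefully is that the termwise integration is legitimate: the linear and quadratic moments are finite because each $|K_i|$ grows at most like $1+x^2$ by~\eqref{eqKiBoundingConstraint}, and the cubic integrand is dominated by $\tfrac23\|c\|^3\varphi M$, which is integrable by~\eqref{eqKxConditionM-2}. All of this requires $\|c\|<\sqrt{\varepsilon}$ so that $u\ge -\tfrac12$ (hence $\hat f_0\ge0$ and $\log(1+u)$ well defined) pointwise; under the hypothesis $\hat f_0\ge0$ of the statement this is precisely the regime in which the expansion is valid.
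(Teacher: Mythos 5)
Your proof is correct and follows essentially the same route as the paper: split $\log\hat f_0$ into the Gaussian part (handled by the moment conditions~\eqref{eqKxConditions}) and $(1+u)\log(1+u)$, Taylor-expand the latter to second order so that orthonormality yields $-\tfrac12\|c\|^2$, and control the cubic remainder via the lower bound~\eqref{eqAsmpKBoundedBelow} and the dominating function $M$. The only cosmetic difference is that you use the Lagrange form of the remainder where the paper uses the integral form, which lets you exhibit the explicit constant $C=\tfrac23$.
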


\begin{proof}
Set $K(x) = (K_1(x), K_2(x), \ldots, K_I(x))^\top$, for $x \in \R$.
Now, with $\hat f_0$ as in Theorem~\ref{theoremSupremumf0Hatf0},
expanding $H[\hat f_0]$ gives,
\begin{align*}
\gdef\sumciK{c^\top K(x)}
H[\hat f_0]
    &= -\int \hat f_0(x) \log \hat f_0(x)  \di x\\
    &= -\int \varphi(x) \Bigl(1 + c^\top K(x) \Bigr)
        \Bigl(\log \varphi(x) + \log \bigl(1 + \sumciK \bigr)\Bigr)\\
    &= -\int \varphi(x) \log\varphi(x) \di x
        - \int \varphi(x) \sumciK \log\varphi(x) \di x \\
     &\hskip2cm
        - \int \varphi(x) \bigl(1 + \sumciK \bigr)
            \log \bigl(1 + \sumciK \bigr) \di x \\
    &= \eta(1)
        - \int \varphi(x) \sumciK \bigl(-\frac{1}{2} \log(2 \pi)
        - \frac{1}{2} x^2 \bigr) \di x \\
    &\hskip2cm
        - \int \varphi(x) \bigl(1 + \sumciK \bigr)
            \log \bigl(1 + \sumciK \bigr) \di x\\
    &= \eta(1)
        - 0
        - \int \varphi(x) \bigl(1 + \sumciK \bigr)
            \log \bigl(1 + \sumciK \bigr) \di x,
\end{align*}
using the constraints given in~\eqref{eqKxConditions}.
To obtain the approximation $\hat H[\hat f_0]$ and remainder
$R(\hat f_0)$ terms, we consider the expansion of
$\bigl(1 + c^\top K(x)\bigr) \log\bigl(1 + c^\top K(x)\bigr)$ around
$c = 0$ using the Taylor series.
Let $q(y) = y \log(y)$, $y \in \R$. Then, we have
\[
        q'(y) = \log(y) + 1;\quad
        q''(y) = \frac{1}{y};\quad
        \text{and } q'''(y) = - \frac{1}{y^2}.
\]
and thus using Taylor series around $y_0$ gives
$q(y_0 + h)= h + \frac{1}{2} h^2 + R_1(y_0, h)$,
where $R_1(y_0, h)$ is the remainder term given by
\begin{align*}
    R_1(y_0, h)
        &= \int_{y_0}^{y_0 + h} \frac{(y_0 + h - \tau)^2}{2}
        \Bigl(\frac{-1}{\tau^2}\Bigr) \di \tau\\
        &= - h^3 \int_0^1 \frac{(1 - t)^2}{2 (1 + t h)^2} \di t
\end{align*}
with the change of variables $\tau = (y_0 + t h)$.

Now let us pick $y_0 = 1$ and $h = c^\top K(x)$ and denote by $R_2(x)$
the corresponding remainder $R_2(x) = R_1(1, \,c^\top K(x))$. Then,
\begin{equation}\label{eqHTaylor}
\gdef\sumciK{c^\top K(x)}
H[\hat f_0]
        = \eta(1)
        - \int \varphi(x) \Bigl(\sumciK
        + \frac{1}{2} \bigl(\sumciK \bigr)^2
        + R_2(x) \Bigr) \di x,
\end{equation}
where the remainder term $R_2(x)$ is given explicitly by
\begin{equation*}
    R_2(x)
        = - \bigl( c^\top K(x) \bigr)^3
            \int_0^1 \frac{(1 - t)^2}{2 \bigl( 1 + t c^\top K(x) \bigr)^2}
            \di t.
\end{equation*}
Now using~\eqref{eqKxConditions} and setting
\begin{equation}\label{eqTaylorRemainderInt}
        R(\hat f_0) := - \int_{\R} \varphi(x) \, R_2(x) \di x
\end{equation}
we get from~\eqref{eqHTaylor},
\begin{align*}
        H[\hat f_0]
            &= \eta(1) + 0 - \frac{1}{2} \sum_{i=1}^I c_i^2 + R(\hat f_0)\\
            &= \hat H[\hat f_0] + R(\hat f_0),
\end{align*}
as needed to be shown. It remains to prove the bound for $R(\hat f_0)$.

From Assumption~\ref{as1K}
there exists some $\varepsilon > 0$ such that
$c^\top K(x) \geq -1/2$ for all $c$ with
$c^\top c \leq \varepsilon$ for all $x \in \R$, and therefore,

\begin{align*}
    \lvert R_2(x) \rvert
        &= \Bigl \lvert \bigl( c^\top K(x) \bigr)^3
            \int_0^1 \frac{(t - 1)^2}{2 \cdot (1 + t c^\top K(x))^2}
            \di t
            \Bigr \rvert\\
        &\leq \bigl \lvert \bigl( c^\top K(x) \bigr) \bigr \rvert^3
            \cdot \Bigl \lvert
            \int_0^1 \frac{(t - 1)^2}{2 \cdot (1 - t / 2)^2}
            \di t
            \Bigr \rvert\\
        &= C \cdot \bigl \lvert \bigl( c^\top K(x) \bigr) \bigr \rvert^3,
\end{align*}
where $C \in \R$, as the integral is of a continuous
function over a compact set.

Now, there exists some $\delta > 0$ such that for all $c^\top c \leq \delta$,
$c_i \leq \lVert c \rVert$ for all $i = 1, 2, \ldots, I$. Then, with
$c^\top c \leq \min(\varepsilon, \delta)$, we have
\begin{align*}
    \lvert R_2(x) \rvert
        &\leq C
            \sum_{i,j,k = 1}^I \bigl \lvert K_i(x) K_j(x) K_k(x) \bigr \rvert
            \cdot \lVert c \rVert^3\\
        &\leq C \cdot M(x) \cdot \lVert c \rVert^3
\end{align*}
having used~\eqref{eqKxConditionM-1} from Assumption~\ref{as1K}.
Putting this all together we obtain the bound for $R(\hat f_0)$,
\[
        \bigl \lvert R(\hat f_0) \bigr \rvert
        \leq \int_{\R} \varphi(x) \lvert R_2(x) \rvert \di x\\
        \leq C \tilde M \lVert c \rVert^3,
    \]
where $\tilde M$ is given in~\eqref{eqKxConditionM-2}, as required.
\end{proof}

\begin{remark}
Note that the density $\hat f_0$ has unit variance. Indeed, by~
\eqref{eqKxConditions},
\begin{align*}
        \int \hat f_0(x) x^2 \di x
                &= \int \varphi(x) \bigl(1 + \sum_{i=1}^I c_i K_i(x)\bigr) \di x\\
                &= \int \varphi(x) x^2 \di x
            + \sum_{i=1}^I c_i \int \varphi(x) K_i(x) x^2 \di x
        = 1.
\end{align*}
Therefore, the negentropy equivalent of the entropy approximation given in
Lemma~\ref{lemmaEntropyApproximation} is
$J[\hat f_0] = \hat J[\hat f_0] + R(\hat f_0)$
with $R(\hat f_0)$ given as in~\eqref{eqTaylorRemainderInt} and
\begin{equation}\label{eqNegentropyApproximation}
    \hat J[\hat f_0] = \frac{1}{2} \lVert c \rVert^2
        = \frac{1}{2} \sum_{i=1}^I c_i^2.
\end{equation}
\end{remark}

\begin{proposition}\label{propNegentropyProportional}
With the same assumptions as in Lemma~\ref{lemmaEntropyApproximation}.
Set $I = 1$. Then,
\begin{equation*}
    \hat J[\hat f_0] \propto \bigl(\Ex_f G(Y) - \Ex_\varphi G(Z) \bigr)^2,
\end{equation*}
where $Y$ is a random variable with density $f$ and $Z \sim \mathcal{N}(0,1)$.
\end{proposition}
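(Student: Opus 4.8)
The plan is to start from the closed form of the negentropy approximation already established in~\eqref{eqNegentropyApproximation}. Setting $I = 1$ there gives $\hat J[\hat f_0] = \tfrac{1}{2} c^2$ with $c = c_1 = \Ex_f K(Y)$, so the entire statement reduces to showing that $c$ is a constant multiple of $\Ex_f G(Y) - \Ex_\varphi G(Z)$; squaring then yields the claimed proportionality and incidentally removes any ambiguity in the sign of that multiple.

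First I would expand $c$ using the explicit form of $K$ from~\eqref{eqKx}, namely $K(x) = \delta^{-1}\bigl(G(x) + \alpha x^2 + \beta x + \gamma\bigr)$, to obtain
\[
    c = \Ex_f K(Y)
        = \frac{1}{\delta}\Bigl(\Ex_f G(Y) + \alpha\, \Ex_f Y^2 + \beta\, \Ex_f Y + \gamma\Bigr).
\]
Because $f$ is the density of the whitened, projected data, it has zero mean and unit variance, so $\Ex_f Y = 0$ and $\Ex_f Y^2 = 1$. Hence the linear term drops out and the quadratic term contributes only its coefficient, leaving $c = \delta^{-1}\bigl(\Ex_f G(Y) + \alpha + \gamma\bigr)$.

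Next I would pin down the constant $\alpha + \gamma$ using the Gaussian orthogonality condition~\eqref{eqKxConditions-2} with $k = 0$, that is $\Ex_\varphi K(Z) = 0$. Writing this out with $Z \sim \Norm(0,1)$ (so $\Ex_\varphi Z = 0$ and $\Ex_\varphi Z^2 = 1$) gives $\Ex_\varphi G(Z) + \alpha + \gamma = 0$, whence $\alpha + \gamma = -\Ex_\varphi G(Z)$. Substituting this back yields $c = \delta^{-1}\bigl(\Ex_f G(Y) - \Ex_\varphi G(Z)\bigr)$, and therefore
\[
    \hat J[\hat f_0] = \frac{1}{2} c^2
        = \frac{1}{2\delta^2}\bigl(\Ex_f G(Y) - \Ex_\varphi G(Z)\bigr)^2
        \propto \bigl(\Ex_f G(Y) - \Ex_\varphi G(Z)\bigr)^2,
\]
with proportionality constant $1/(2\delta^2)$.

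There is no serious obstacle here: the computation is short and relies only on results already in hand. The one point that requires care is the bookkeeping---recognising that the whitening of $f$ is what annihilates the linear and quadratic contributions to $\Ex_f K(Y)$, while it is the single orthogonality relation $\Ex_\varphi K(Z) = 0$ (rather than all three constraints in~\eqref{eqKxConditions-2}) that fixes the constant offset and produces the Gaussian expectation $\Ex_\varphi G(Z)$. Keeping track of the factor $\delta$ throughout gives the explicit proportionality constant $1/(2\delta^2)$, although only proportionality is asserted in the statement.
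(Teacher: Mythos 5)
Your proof is correct and follows essentially the same route as the paper's: both reduce the claim to the identity $c = \delta^{-1}\bigl(\Ex_f G(Y) - \Ex_\varphi G(Z)\bigr)$ using the whitening of $f$ (so $\Ex_f Y = 0$, $\Ex_f Y^2 = 1$) together with the Gaussian orthogonality constraints, and then apply $\hat J[\hat f_0] = \tfrac{1}{2}c^2$ from~\eqref{eqNegentropyApproximation}. The only difference is that you extract $\alpha + \gamma = -\Ex_\varphi G(Z)$ directly from the single $k=0$ condition in~\eqref{eqKxConditions-2}, whereas the paper solves all three constraints for $\alpha$, $\beta$, $\gamma$ individually (its equations~\eqref{eqABGEquations}) before summing; your version is slightly more economical but not a genuinely different argument.
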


\begin{proof}
By the constraints that need to be satisfied by $K$, given in Assumption~\ref{as1K}, we have
$\int \varphi(x) K(x) x^k \di x = 0$ for $k = 0, 1, 2$.
Substituting~\eqref{eqKx} for $K(x)$ in~\eqref{eqKxConditions} and solving
these three equations gives an
explicit expression for $\alpha, \beta, \gamma$ in terms of $G$, given by,
\begin{align}\label{eqABGEquations}
    \alpha &= \frac{1}{2}\Bigl(\int \varphi(x) G(x) \di x
            - \int \varphi(x) G(x) x^2 \di x\Bigr);\nonumber\\
    \beta &= -\int \varphi(x) G(x) x \di x;\\
    \gamma &= \frac{1}{2}\Bigl(\int \varphi(x) G(x) x^2 \di x
            - 3 \int \varphi(x) G(x) \di x\Bigr)\nonumber.
\end{align}
Recall that
$c = \Ex K(Y) = \dfrac{1}{\delta} \bigl( \Ex G(Y) + \alpha \Ex Y^2
+ \beta \Ex Y + \gamma \bigr)$.
Now using~\eqref{eqABGEquations} and the fact that $\Ex Y = 0$ and $\Ex Y^2 =
1$ (since $Y$ has density $f$), we get
$c = \dfrac{1}{\delta} \bigl( \Ex G(Y) - \Ex G(Z) \bigr)$.
From~\eqref{eqNegentropyApproximation} with $I = 1$, we have
$\hat J[\hat f_0] = \dfrac{1}{2} c^2$, hence,
\begin{equation*}
    \hat J[\hat f_0]
        = \frac{\bigl(\Ex G(Y) - \Ex G(Z) \bigr)^2}{2\,\delta^2}.
\end{equation*}
This completes the proof, with $C = 2\,\delta^2$ in Step~3.\ of Section~
\ref{secFastIca}. Note that $\delta$ can be found by solving the
additional constraint $\int\varphi(x)K(x)^2\di x = 1$.
\end{proof}

This concludes our discussion of the approximations used in fastICA.  We have shown
that under certain conditions, the approximations given in Steps~2., 3.\ and~4.\ in
Section~\ref{secFastIca} are ``close'' to the true values. We will now give an
example where these approximations are indeed close to one-another, but the
surrogate density of the projections, $f_0$ from Step~1.\, is not close to the true
density~$f$.

\section{Example}
\label{secExample}

We now highlight the approximation steps as explained in Section~\ref{secFastIca} on
a toy example.
In this section we use example data as illustrated in Figure~\ref{figPoints}, which
was intentionally created in a
very simplistic manner to further emphasise the ease at which false
optima are found using the contrast function $\hat J^\ast(y)$. The data
was obtained by pre-selecting vertical columns where no data points
are allowed. An iterative scheme was then employed, as explained
below:
\begin{enumerate}
    \item Sample $n$ points from a standard two-dimensional Gaussian distribution;
    \item Remove all points that lie in the pre-specified columns;
    \item Whiten the remaining $\tilde n$ points;
    \item Sample $n - \tilde n$ points from a standard two-dimensional Gaussian
distribution.
\end{enumerate}
Repeat 2.\ - 4.\ until we have a sample of size $n$ with no points
lying in the pre-specified columns. No optimisation was done to the
distribution of these points to attempt to force the fastICA contrast
function to have a false optimum.

We will use the $m$-spacing approximation~\eqref{m-spacing} to obtain a contrast function
that can be compared to the fastICA contrast function~\eqref{eqHatJStarY}.
Following \citet{learned2003ica},
we chose $m = \sqrt{n}$, where $n \in \mathbb{N}$ is the number of
observations. This was chosen so that the condition $m / n \rightarrow 0$ as $n
\rightarrow \infty$ is satisfied \citep{vasicek1976test, beirlant1997nonparametric}.
This approximation to entropy is a direct approximation to $H[f]$, and
therefore does not involve an equivalent Step~1.\ from Section~\ref{secFastIca} where
$f$ is substituted by a new density $f_0$.

Using the $m$-spacing method to find the first independent component loading,
we want to obtain the direction
$w^\ast := \argmin_{w \in \R^p, w^\top w = 1} H_{m, n}(Dw)$. In
the example of this paper, numerical minimisation is used to obtain $w^\ast$ and the
associated projection $D w^\ast$. The contrast function to compare against the
fastICA contrast
function~\eqref{eqHatJStarY}
is given by the $m$-spacing negentropy approximation, $J_{m,n}(y) = \eta(1) -
H_{m,n}(y)$ for directions $w \in \R^n$ on the half-sphere. Note that
$w^\ast = \argmax_{w \in \R^p, w^\top w = 1} J_{m, n}(Dw)$.
In general this contrast function is not very smooth, although a method to attempt
to overcome this non-smoothness (and the resulting local optima, which can cause
numerical optimisation issues) is given in \citet{learned2003ica},
and involves replicating the data with some added Gaussian noise.

To illustrate the kind of problems which can occur during the
approximation from $f$ to $\hat f_0$ and from $J[f]$ to
$\hat J^\ast(y)$, we construct an example where the density $f$ in the
direction of maximum negentropy is significantly different to
$\hat f_0$ in the same direction. This results in \texttt{fastICA}
selecting a sub-optimal projection, as shown below.
Here we just consider the case
$I = 1$ in Assumption~\ref{assumptionGK}, with one $G = G_1$ and thus one $K=K_1$.
Moreover, in \texttt{fastICA} there is a choice of two functions to use,
$G(x) := (1 / \alpha) \log \cosh(\alpha x)$, $\alpha \in [1, 2]$, and
$G(x) := -\exp(- x^2 / 2)$.
We have considered these two functions with varying alpha, as well as the
fourth moment contrast function given in \citet{miettinen2015fourth}.
Here, the function for Step~3.\ in Section~\ref{secFastIca} is
$\lvert \Ex_f X^4 - 3 \rvert$, and the empirical approximation of the expectation is
used for Step~4., such that the approximate contrast function is
$\lvert \frac{1}{n} \sum_{i = 1}^n y_i^4 - 3 \rvert$.
In the example of this paper all choices give very similar results and
thus we only show the fastICA contrast function resulting from
$G(x) = (1 / \alpha) \log \cosh(\alpha x)$, with $\alpha = 1$ for simplicity.

With the data distributed as in Figure~\ref{figPoints}, the negentropy
over projections in the directions $w_{\theta} = (\sin(\theta), \cos(\theta))$ with
$\theta \in [0, \pi)$ found by the $m$-spacing approximation
and used in the \texttt{fastICA} method is shown in
Figure~\ref{figObjectives}. The contrast function obtained by approximating
$J[f_0]$ directly is also included as the dashed line. The three contrast
functions have been placed below one-another in the order of approximations given in
Figure~\ref{figFastIcaApproximations} and so the $y$-axis is independent for each.
The search is only performed
on the half unit circle, as projections in directions
$w_1 = (\sin(\theta), \cos(\theta))$ and
$w_2 = (\sin(\theta + \pi), \cos(\theta + \pi))$
for any $\theta \in [0, \pi)$
have a reflected density with the same entropy. It is clear from
Figure~\ref{figObjectives} that the \texttt{fastICA} result $\hat J^\ast$ is poor,
with the \texttt{fastICA} contrast function missing the peak of
negentropy that appears when using $m$-spacing. The contrast function
used in the \texttt{fastICA} method clearly differentiates between the
direction of the maximum and other directions, and thus in this
example it is both confident and wrong (since there is a clear and
unique peak). This is also true of the direct approximation to $J[f_0]$, showing
that issues occur at the first step of approximations, when $J[f_0]$ is used instead
of $J[f]$.

\begin{figure}
  \centering
  \input{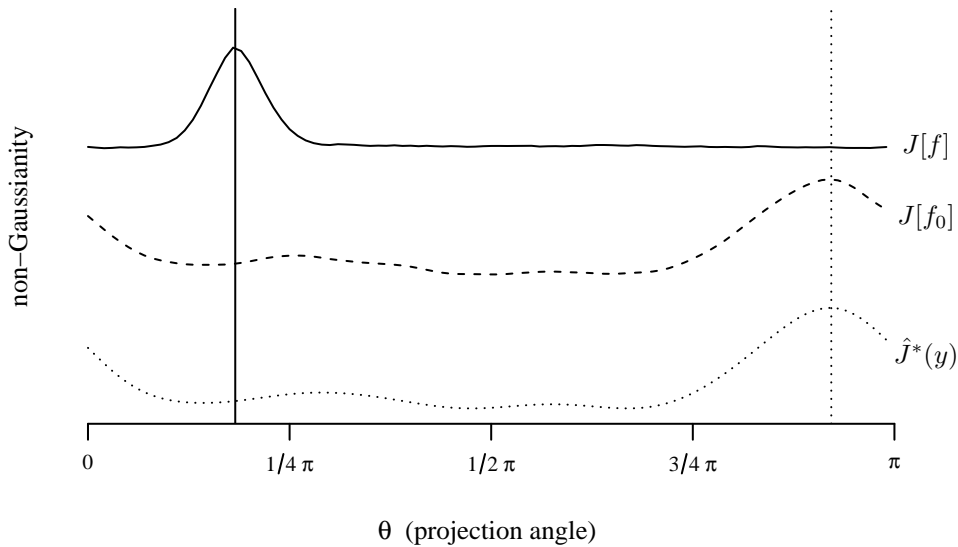}
  \caption{\label{figObjectives}Objective functions of $m$-spacing (solid line),
        $J[f_0]$ (dashed line) and \texttt{fastICA}
        method (dotted line) for projections of the data given in
        Figure~\ref{figPoints} in the directions $\theta \in [0, \pi )$.
        These correspond to $J[f]$, $J[f_0]$ and $\hat J^\ast(y)$ in Figure~\ref{figFastIcaApproximations}.
        The vertical lines give the directions which maximise the contrast functions
        for $m$-spacing (solid line) and \texttt{fastICA} (dotted line).}
\end{figure}

As is shown in Section~\ref{secProofs}, for sufficiently small $c$,
the approximation for the density $\hat f_0$ (given in~
\eqref{eqhatf0}) is ``close to'' $f_0$ (given in~\eqref{eqf0}), and the
speed of convergence is of order $c^2$ for $c \rightarrow
0$. Therefore, it is our belief (backed up by computational
experiments) that the majority of the loss of accuracy occurs in the
approximation step where the surrogate $f_0$ is used instead of $f$,
rather than in the later estimation steps for $J[\hat f_0]$ and
$\hat J^\ast(y)$. This can be seen by comparing numerically the contrast
functions $J[f]$, $J[f_0]$ and $\hat J^\ast(y)$ (shown in
Figure~\ref{figObjectives}), and by comparing the
densities $f$, $f_0$ and $\hat f_0$.
Here, $J[f_0]$ and $\hat J^\ast(y)$ give similar directions
for the maximum, and these differ significantly from the location of
the maximum of $J[f]$. This is a fundamental theoretical problem with
the fastICA method, and is not a result of computational or
implementation issues with \texttt{fastICA}.  In particular, the fact that
the dotted vertical line in Figure~\ref{figObjectives} is at the maximum
of $\hat J^\ast(y)$ indicates that the effect is not a convergence
problem in the \texttt{fastICA} implementation.

\section{Conclusions}
\label{secConclusion}

In this paper we have given an example where the fastICA method misses
structure in the data that is obvious to the naked eye. Since this
example is very simple, the fastICA result is concerning, and this
concern is magnified when working in high dimensions as visual
inspection is no longer easy. There is clearly some issue with the
contrast function (surrogate negentropy) used in fastICA. Indeed,
this surrogate has the property of being an approximation of a {\em
lower bound} for negentropy, and this does not necessarily capture
the actual behaviour of negentropy over varying projections since we
want to maximise {\em negentropy}. To strengthen the claim that
accuracy is lost when substituting the density with the surrogate, we
have shown convergence results for all the approximation steps used in
the method.

To conclude this paper, we ask the following questions which could
make for interesting future work: {\em Is there a way, a priori, to
 know whether fastICA will work?} This is especially pertinent when
fastICA is used with high dimensional data. The trade-off in accuracy
for the fastICA method comes at the point where the density $f$ is
substituted with $f_0$. Therefore one could also ask: {\em Are there
other methods similar to that of fastICA but that use a different
surrogate density which more closely reflects the true projection
density?}

If these two options are not possible, then potentially a completely
different method for ``fast'' ICA is needed, one that either gives a
``good'' approximation for all distributions, or where it is known
when it breaks down. An initial step in this direction can be found
in \citet{smith2020cluster}. In this work the authors propose a new
ICA method, known as clusterICA, using the $m$-spacing approximation
for entropy discussed in this paper, combined with a clustering
procedure.

\section*{Acknowledgement}

We thank the Editor, Associate Editor and referees for their helpful comments.
P.~Smith was funded by NERC DTP SPHERES, grant number NE/L002574/1.

\bibliographystyle{plainnat}
\bibliography{refs}



\end{document}